\newcommand{\defn}{\ensuremath{:  =}}
\newtheorem{theorem}{Theorem}[section]
\newtheorem{lemma}{Lemma}[section]
\newtheorem{remark}{Remark}[section]
\let\oldremark\remark
\renewcommand{\remark}{\oldremark\normalfont}
\renewenvironment{proof}{\vspace{-0.2cm} \noindent{\bf Proof. }}{\qed}
\newenvironment{proof-sketch}{\noindent{\bf Sketch of Proof}
  \hspace*{1em}}{\qed\smallskip\\}
\newenvironment{proof-idea}{\noindent{\bf Proof Idea}
  \hspace*{1em}}{\qed\smallskip\\}
\newenvironment{proof-of-lemma}[1][{}]{\noindent{\bf Proof of Lemma {#1}}
  \hspace*{1em}}{\qed\smallskip\\}
\newenvironment{proof-of-theorem}[1][{}]{\noindent{\bf Proof of Theorem {#1}}
  \hspace*{1em}}{\qed\smallskip\\}
\newenvironment{proof-of-proposition}[1][{}]{\noindent{\bf
    Proof of Proposition {#1}}
  \hsp\ace*{1em}}{\qed\smallskip\\}
\newenvironment{proof-attempt}{\noindent{\bf Proof Attempt}
  \hspace*{1em}}{\qed\bigskip\\}
\long\def\symbolfootnote[#1]#2{\begingroup%
 \def\thefootnote{\fnsymbol{footnote}}\footnote[#1]{#2}\endgroup}
\renewcommand\eqref[1]{(\ref{#1})}
\long\def\symbolfootnote[#1]#2{\begingroup%
 \def\thefootnote{\fnsymbol{footnote}}\footnote[#1]{#2}\endgroup}
\newcommand{\CIT}{\mathsf{CIT}}
\newcommand\ind{\protect\mathpalette{\protect\independenT}{\perp}} \def\independenT#1#2{\mathrel{\rlap{$#1#2$}\mkern2mu{#1#2}}}
\newcommand{\Xf}{\mathfrak{X}}
\newcommand{\Pb}{\mathbb{P}}
\renewcommand\eqref[1]{(\ref{#1})}
\begin{document}

\twocolumn[

\aistatstitle{Active Learning for Undirected Graphical Model Selection}

\aistatsauthor{ Divyanshu Vats \And Robert D. Nowak \And Richard G. Baraniuk }

\aistatsaddress{ Rice University \And University of Wisconsin - Madison \And Rice University } ]

\begin{abstract}
This paper studies graphical model selection, i.e., the problem of estimating a graph of statistical relationships among a collection of random variables.  Conventional graphical model selection algorithms are passive, i.e., they require all the measurements to have been collected before processing begins.  We propose an active learning algorithm that uses junction tree representations to adapt future measurements based on the information gathered from prior measurements.  We prove that, under certain conditions, our active learning algorithm requires fewer scalar measurements than any passive algorithm to reliably estimate a graph.  A range of numerical results validate our theory and demonstrates the benefits of active learning.
\end{abstract}

\section{Introduction}
An important problem that arises in many applications is that of inferring the statistical relationships between a large collection of random variables.  For example, the random variable could represent expression values of a gene, opinions of a person, or stock returns of a company. Graphical models compactly represent statistical relationships using a graph.  The vertices in the graph represent random variables, and the edges in the graph represent statistical relationships between random variables \cite{WainwrightJordan2008}.  Although the graph may be of three types, namely directed, undirected, or mixed, we only study undirected graphs here.  Given measurements drawn from a graphical model, there are now several  algorithms for estimating the graph of statistical relationships.  See \cite{NicolaiPeter2006,BanerjeeGhaoui2008,AnimaTanWillsky2011b} for Gaussian graphical models, \cite{RavikumarWainwrightLafferty2010,
NetrapalliSanghaviAllerton2010,AnimaTanWillsky2011a} for discrete graphical models, and \cite{lafferty2012sparse} for nonparametric graphical models.

All conventional algorithms for learning graphical models are \textit{passive}, i.e., they rely on all the measurements being collected before any processing begins.  We envision several applications of {active learning for graphical models}, where future measurements are collected based on the information gathered from prior measurements and/or prior knowledge.  For example, in gene expression analysis,  once enough measurements have been obtained from a large collection of genes, subsequent measurements can be focused on a subset of genes with more complex interactions.  In social network analysis, measurements can be focused on a small subset of people rather than all people in the social network.  

Although there exists active learning algorithms for various statistical inference problems, including classification \cite{settlestr09}, sparse signal recovery \cite{haupt2011distilled}, clustering \cite{eriksson2011active}, multiple testing \cite{malloy2011limits}, matrix completion \cite{krishnamurthy2013sequential}, and causal structure discovery \cite{tong2001active},  the methods in these works do not necessarily apply to learning graphical models.  Furthermore, although there exists methods for designing optimal experiments for learning statistical models \cite{pukelsheim1993optimal}, we are not aware of any work that studies active learning for graphical models.

In this paper, we propose an active learning algorithm for learning the structure of the graph in a graphical model.  On a high level, our algorithm is summarized as follows.  Suppose we have a large graph that is composed of two or more subgraphs that may have complicated structures themselves, but have relatively few edges between them.  In principle it should be easier to identify the gross structure of the graph (i.e., the subsets of vertices corresponding to each subgraph and the few edges between these sets of vertices), then to identify the full graph structure.  So we pursue a sequential and active approach to learn the graph.  

First, we obtain full joint measurements of all the vertices and identify the gross structure.  The gross structure allows us to partition the large graph into multiple subgraphs.  We then identify the edges and the non-edges in each subgraph that have been estimated reliably.  Next, we collect additional, focused measurements, over a subset of the vertices to identify the edges that could not be reliably estimated using the past measurements.  The advantage of this sort of approach is that many of the measurements only involve a smaller subset of the vertices.  For this reason, the total number of \textit{scalar measurements} required for reliable graph estimation using this sort of active procedure can be significantly lower than the total number of scalar measurements required by conventional passive methods.

Theoretically, we establish sufficient conditions on the number of scalar measurements needed for reliable graph estimation using an active learning algorithm. Next, we analyze our algorithm when given additional knowledge about the absence of certain edges in the graph.  We prove that, under certain favorable conditions, an active learning algorithm can  estimate walk-summable Gaussian graphical models over $p$ vertices using only $O(p_{\min} \theta_{\min}^{-2} \log p_{\min})$ scalar measurements, while any passive algorithm necessarily requires $O(p \theta_{\min}^{-2} \log p_{\min})$ scalar meausurements.  Here, $p_{\min}$ is the size of the smallest cluster in a junction tree representation after incorporating the prior knowledge and $\theta_{\min}$ quantifies the intrinsic difficulty of the graphical model selection problem.  The particular conditions in our analysis depend on the positioning certain \textit{``weak edges"} in the graph and the scaling of the parameter $\theta_{\min}$.  Finally, we empirically demonstrate the benefits of our algorithm using numerical simulations.

%The rest of the paper is organized as follows.  Section~2 reviews undirected graphical models.  Section~3 presents our active learning algorithm.  Section~4 reviews an undirected graphical model selection algorithm that allows to closely study the theoretical properties of active learning.  Section~5 defines a family of graphical models for which active learning results in advantages over a passive learning algorithm.  Section~6 presents our main theoretical results.  Section~7 presents numerical results.  Section~8 concludes the paper.

\section{Undirected Graphical Models}

An undirected graphical model is a joint probability distribution, say $P_X$, defined on a graph $G^* = (V, E(G^*))$, where $V = \{1, . . . , p\}$ indexes the random vector $X = (X_1, . . . , X_p)$.  For any graph $G$, we use the notation $E(G)$ to denote its edges.  The vertices $V$ index the random variables and the edges $E(G^*)$ encode statistical relationships between the random variables.  In particular, when $P_X > 0$, undirected graphical models can be characterized using Markov properties.  One such Markov property is the \textit{global Markov property} which says that whenever a set of vertices $A$ and $B$ are separated by $S$, then $X_A$ is independent of $X_B$ given $X_S$.  Note that a set $S$ separates $A$ and $B$ if all paths from $A$ to $B$ pass through $S$.  In this paper, we consider the \textit{graphical model selection problem} of estimating the unknown graph $G^*$ given measurements drawn from the probability distribution $P_X$.  
%We use the notation $\Xf_{A}^n = (X_{A}^{(1)},\ldots,X_A^{(n)})$ $n$ i.i.d. measurements drawn from the probability distribution $P_{X_A}$, where $A$ is a subset of $V$.

\section{Active Learning Algorithm}
In this section, we present our active learning algorithm for graphical model selection.  Recall that our goal is to actively draw measurements from $P_X$.  Section~\ref{subsec:overview} discusses our algorithm.  Section~\ref{subsec:active} discusses a key step in our algorithm that determines the future measurements given prior measurements.

\subsection{Algorithm Overview}
\label{subsec:overview}


\begin{figure}
\centering
\scalebox{0.38} % Change this value to rescale the drawing.
{
\begin{pspicture}(0,-2.350563)(17.12,2.350563)
\definecolor{color0b}{rgb}{0.8,0.8,1.0}
\definecolor{color804b}{rgb}{1.0,0.8,0.6}
\psbezier[linewidth=0.04,linecolor=white,fillstyle=solid,fillcolor=color804b](14.468284,0.2505631)(15.12657,0.62941027)(15.157397,0.7305631)(16.128698,-0.110288695)(17.1,-0.9511405)(16.359856,-1.5283108)(15.68,-1.9294369)(15.000145,-2.3305628)(14.450852,-1.4709399)(14.075898,-1.119311)(13.700944,-0.767682)(13.809997,-0.128284)(14.468284,0.2505631)
\pscircle[linewidth=0.02,dimen=outer,fillstyle=solid,fillcolor=color0b](12.575501,-0.079436876){0.34999996}
\pscircle[linewidth=0.02,dimen=outer,fillstyle=solid,fillcolor=color0b](14.335502,0.8005631){0.35}
\pscircle[linewidth=0.02,dimen=outer,fillstyle=solid,fillcolor=color0b](13.5355015,0.040563125){0.34999993}
\pscircle[linewidth=0.02,dimen=outer,fillstyle=solid,fillcolor=color0b](14.295502,-0.73943657){0.35}
\pscircle[linewidth=0.02,dimen=outer,fillstyle=solid,fillcolor=color0b](13.575501,1.4605631){0.34999993}
\pscircle[linewidth=0.02,dimen=outer,fillstyle=solid,fillcolor=color0b](15.195501,0.12056313){0.35}
\pscircle[linewidth=0.02,dimen=outer,fillstyle=solid,fillcolor=color0b](12.695502,0.82056326){0.35}
\pscircle[linewidth=0.02,dimen=outer,fillstyle=solid,fillcolor=color0b](15.195501,-0.73943657){0.35}
\pscircle[linewidth=0.02,dimen=outer,fillstyle=solid,fillcolor=color0b](15.195501,-1.5794365){0.35}
\pscircle[linewidth=0.02,dimen=outer,fillstyle=solid,fillcolor=color0b](16.135502,-0.73943657){0.35}
\psline[linewidth=0.04cm](12.965502,1.0305632)(13.305502,1.2705632)
\psline[linewidth=0.04cm](13.825502,1.2505631)(14.085502,1.0305632)
\psline[linewidth=0.04cm](13.7855015,0.27056327)(14.125502,0.5705632)
\psline[linewidth=0.04cm](13.7455015,-0.18943672)(14.065502,-0.5094367)
\psline[linewidth=0.04cm](14.605502,0.6105631)(14.925502,0.33056328)
\psline[linewidth=0.04cm](14.525501,-0.4894367)(14.965502,-0.109436736)
\psline[linewidth=0.04cm](14.485501,-1.0094368)(14.925502,-1.3894368)
\psline[linewidth=0.04cm](15.445501,-0.08943678)(15.925502,-0.5094367)
\psline[linewidth=0.04cm](15.905501,-0.9494369)(15.465502,-1.3694369)
\psline[linewidth=0.04cm](14.625502,-0.72943676)(14.865502,-0.72943676)
\psline[linewidth=0.04cm](15.545502,-0.72943676)(15.805502,-0.72943676)
\psbezier[linewidth=0.04,linecolor=white,fillstyle=solid,fillcolor=color804b](0.8378859,1.6001736)(1.5481508,2.330563)(2.9255018,1.8305632)(4.1455016,0.5305631)(5.3655014,-0.76943696)(4.829667,-1.4937365)(3.9514241,-1.8615868)(3.0731812,-2.229437)(2.9110036,-1.2998265)(1.4655018,-0.90943694)(0.02,-0.5190474)(0.127621,0.86978406)(0.8378859,1.6001736)
\pscircle[linewidth=0.02,dimen=outer,fillstyle=solid,fillcolor=color0b](0.87550175,-0.03943695){0.34999996}
\pscircle[linewidth=0.02,dimen=outer,fillstyle=solid,fillcolor=color0b](2.635502,0.84056306){0.35}
\pscircle[linewidth=0.02,dimen=outer,fillstyle=solid,fillcolor=color0b](1.8355018,0.080563046){0.34999993}
\pscircle[linewidth=0.02,dimen=outer,fillstyle=solid,fillcolor=color0b](2.5955017,-0.69943666){0.35}
\pscircle[linewidth=0.02,dimen=outer,fillstyle=solid,fillcolor=color0b](1.8755018,1.500563){0.34999993}
\pscircle[linewidth=0.02,dimen=outer,fillstyle=solid,fillcolor=color0b](3.4955018,0.16056305){0.35}
\pscircle[linewidth=0.02,dimen=outer,fillstyle=solid,fillcolor=color0b](0.9955019,0.8605632){0.35}
\pscircle[linewidth=0.02,dimen=outer,fillstyle=solid,fillcolor=color0b](3.4955018,-0.69943666){0.35}
\pscircle[linewidth=0.02,dimen=outer,fillstyle=solid,fillcolor=color0b](3.4955018,-1.5394367){0.35}
\pscircle[linewidth=0.02,dimen=outer,fillstyle=solid,fillcolor=color0b](4.4355016,-0.69943666){0.35}
\psline[linewidth=0.04cm](1.2655019,1.0705632)(1.6055018,1.3105632)
\psline[linewidth=0.04cm](2.125502,1.2905631)(2.3855019,1.0705632)
\psline[linewidth=0.04cm](2.0855017,0.31056318)(2.4255018,0.61056316)
\psline[linewidth=0.04cm](2.0455017,-0.1494368)(2.365502,-0.46943682)
\psline[linewidth=0.04cm](2.9055018,0.65056306)(3.2255018,0.37056318)
\psline[linewidth=0.04cm](2.8255017,-0.44943678)(3.2655017,-0.06943681)
\psline[linewidth=0.04cm](2.7855017,-0.96943694)(3.2255018,-1.349437)
\psline[linewidth=0.04cm](3.7455018,-0.049436852)(4.2255015,-0.46943682)
\psline[linewidth=0.04cm](4.2055016,-0.90943694)(3.7655017,-1.3294369)
\psline[linewidth=0.04cm](2.9255018,-0.68943685)(3.1655018,-0.68943685)
\psline[linewidth=0.04cm](3.8455017,-0.68943685)(4.1055017,-0.68943685)
\psbezier[linewidth=0.04,linecolor=white,fillstyle=solid,fillcolor=color804b](7.687249,1.0305631)(8.447249,1.5305631)(8.707249,1.6105632)(9.892751,0.5105632)(11.078252,-0.58943677)(10.576916,-1.5137364)(9.698673,-1.8815867)(8.820431,-2.2494369)(7.747249,-1.2694368)(7.387249,-0.80943686)(7.0272493,-0.34943688)(6.927249,0.5305631)(7.687249,1.0305631)
\pscircle[linewidth=0.02,dimen=outer,fillstyle=solid,fillcolor=color0b](6.6227508,-0.059436873){0.34999996}
\pscircle[linewidth=0.02,dimen=outer,fillstyle=solid,fillcolor=color0b](8.382751,0.82056314){0.35}
\pscircle[linewidth=0.02,dimen=outer,fillstyle=solid,fillcolor=color0b](7.582751,0.060563125){0.34999993}
\pscircle[linewidth=0.02,dimen=outer,fillstyle=solid,fillcolor=color0b](8.342751,-0.7194366){0.35}
\pscircle[linewidth=0.02,dimen=outer,fillstyle=solid,fillcolor=color0b](7.6227508,1.480563){0.34999993}
\pscircle[linewidth=0.02,dimen=outer,fillstyle=solid,fillcolor=color0b](9.242751,0.14056313){0.35}
\pscircle[linewidth=0.02,dimen=outer,fillstyle=solid,fillcolor=color0b](6.742751,0.8405633){0.35}
\pscircle[linewidth=0.02,dimen=outer,fillstyle=solid,fillcolor=color0b](9.242751,-0.7194366){0.35}
\pscircle[linewidth=0.02,dimen=outer,fillstyle=solid,fillcolor=color0b](9.242751,-1.5594366){0.35}
\pscircle[linewidth=0.02,dimen=outer,fillstyle=solid,fillcolor=color0b](10.182751,-0.7194366){0.35}
\psline[linewidth=0.04cm](7.012751,1.0505632)(7.352751,1.2905632)
\psline[linewidth=0.04cm](7.872751,1.2705631)(8.132751,1.0505632)
\psline[linewidth=0.04cm](7.832751,0.29056326)(8.17275,0.59056324)
\psline[linewidth=0.04cm](7.792751,-0.16943672)(8.112751,-0.48943675)
\psline[linewidth=0.04cm](8.652751,0.63056314)(8.972751,0.35056326)
\psline[linewidth=0.04cm](8.572751,-0.4694367)(9.012751,-0.08943673)
\psline[linewidth=0.04cm](8.532751,-0.98943686)(8.972751,-1.3694369)
\psline[linewidth=0.04cm](9.492751,-0.06943677)(9.972751,-0.48943675)
\psline[linewidth=0.04cm](9.952751,-0.92943686)(9.512751,-1.3494369)
\psline[linewidth=0.04cm](8.67275,-0.7094368)(8.912751,-0.7094368)
\psline[linewidth=0.04cm](9.592751,-0.7094368)(9.852751,-0.7094368)
\psline[linewidth=0.04cm,arrowsize=0.013cm 2.0,arrowlength=1.4,arrowinset=0.4,doubleline=true,doublesep=0.08]{->}(4.42,1.2905631)(5.86,1.2905631)
\usefont{T1}{ptm}{m}{n}
\rput(5.09,1.885563){\huge Measurements}
\psline[linewidth=0.04cm,arrowsize=0.013cm 2.0,arrowlength=1.4,arrowinset=0.4,doubleline=true,doublesep=0.08]{->}(10.14,1.230563)(11.58,1.230563)
\usefont{T1}{ptm}{m}{n}
\rput(10.89,1.885563){\huge Measurements}
\end{pspicture} 
}
\vspace{-0.2cm}
\caption{ Shaded regions represent the active vertices.  As measurements are acquired, the number of active vertices decrease.}%  All edges and non-edges outside the shaded region are edges and non-edges estimated to be in the true graph.}
\vspace{-0.2cm}
\label{fig:exampleactiveugms}
\end{figure}

\begin{mdframed}
\textbf{Algorithm~1}: Active Learning
\begin{itemize}
\item Inputs: $A$, $\widehat{E}$, $\widehat{F}$, ${q}$, ${K}$, and ${\delta}$.
\item Initialization: $\Xf \gets \emptyset$
\item For $w = 1,2,\ldots,K$
\begin{itemize}
\item If $w = K$, then $\delta \gets 1$
\item $m \gets \lfloor \delta q / |A| \rfloor  $ ; $q \gets (1-\delta) q$
\item $\Xf_A^m \gets $ Draw $m$ i.i.d. samples from $P_{X_A}$. 
\item Update measurements: $\Xf \gets \Xf \cup \Xf^m_A$
\item Update $A$, $\widehat{E}$, and $\widehat{F}$ using Algorithm~2.
\end{itemize}
\item Estimate the remaining edges and combine with $\widehat{E}$ and $\widehat{F}$ to output $\widehat{G}$.
\end{itemize}
\end{mdframed}

Algorithm~1, which can be seen as an  extension of the active methods for sparse signal recovery \cite{haupt2011distilled,malloy2011limits} applied to graphs, presents our active learning algorithm for graphical model selection with the following inputs:
\begin{itemize}
\item Active vertices $A$:  We say that $A \subseteq V$ are \textit{active vertices} if all edges and non-edges over $A^c$ and those connecting $A^c$ to $A$ are \textit{known}.
\item Estimated edges $\widehat{E}$: Edges that have been estimated to be in the true graph.
\item Estimated non-edges $\widehat{F}$: Edges that have been estimated to \textit{not} be in the true graph.
\item Measurement budget $q$: Total number of scalar measurements Algorithm~1 should draw from $P_X$.
\item Number of measurement rounds $K$: Number of times Algorithm~1 draws measurements from $P_X$.
\item Fraction of measurements $\delta$: The fraction of scalar measurements drawn in each round.
\end{itemize}
The main idea in Algorithm~1 is to sequentially draw measurements from $P_X$ and check for edges and non-edges that can be reliably estimated using prior measurements.
Algorithm~1 initiates by drawing measurements from the active vertices $A$, where the number of measurements is determined by $q$ and $\delta$. Next, the sets $A$, $\widehat{E}$, and $\widehat{F}$ are updated using Algorithm~2, which is discussed in Section~\ref{subsec:active}.  In general, as illustrated in Figure~\ref{fig:exampleactiveugms}, as measurements are acquired, the size of the set $A$ decreases since parts of the graph are reliably estimated using prior measurements.

\subsection{Finding Active Vertices}
\label{subsec:active}
In this section, we discuss the challenging step in Algorithm~1 of updating the active vertices $A$, the edges $\widehat{E}$, and the non-edges $\widehat{F}$.  Our main idea  is to estimate two graphs, $H^+$ and $H^-$, such that $H^+$ is likely to contain all the true edges and $H^-$ is likely to contain a subset of the true edges.  The edges $\widehat{E}$ and $\widehat{F}$ can then be identified from $H^{-}$ and $H^{+}$, respectively.  We now want to devise an algorithm to find the active vertices $A$ given $H^{-}$ and $H^{+}$.
For a set $U$ and a graph $G$, let $G[U]$ be the \textit{induced subgraph} over $U$ that contains all edges from $G$ that only involve the vertices $U$.
Note the following:
\begin{itemize}
\item If $H^+ = H^-$, we clearly do not need any more measurements.
\item Suppose $U$ and $U'$ have the property that $U \backslash U'$ is separated from all other vertices.  If $H^+[U] = H^-[U]$, then we must have that $G^*[U] = H^+[U] = H^-[U]$.  In this case, there is no need to draw measurements from the vertices $U \backslash U'$ and the sets $\widehat{E}$ and $\widehat{F}$ can be modified accordingly.  We may still need to draw measurements from $U'$ since edges in other clusters may depend on $U'$.  
\item If $H^+[U] \ne H^-[U]$, all vertices over $U$ may need to be observed further.
\end{itemize}
To identify appropriate sets $U$, we use junction tree representations of the graph $H^+$.  Informally, a junction tree clusters vertices in a graph so that the resulting graph over the clusters is a tree; see \cite{Lauritzen1996} for more details.  In prior work, we have used junction trees to improve the performance of passive graphical model selection algorithms \cite{VatsNowakJMLR2014}.  As it turns out, since we are only interested in the clusters of the junction tree, it is sufficient to identify the cliques in a chordal graph of $H^+$; see \cite{GraphTheoryWest2000} for a definition of chordal graphs.  

A graph may have multiple chordal graphs.  An optimal chordal graph, which is computationally difficult to find, is defined so that the size of the maximum clique is the smallest.  Although finding optimal cliques is ``ideal" for our algorithm, it is not necessary for our algorithm to function properly.  In our implementation, we use linear time greedy heuristics \cite{BerryElimination2003}, which are known to output close to optimal chordal graphs \cite{JensenJenson1994}.  A summary of the above steps is shown in Algorithm~2.

To illustrate Algorithm~2, consider the graphs $G^*$, $H^+$, and $H^-$ in Figure~\ref{fig:h5}.  A simple calculation shows that the cliques of a chordal graph of $H^+$ are $\{1,2,4\}$, $\{2,3,4\}$, $\{3,4,5\}$, and $\{5,6\}$.  Comparing the induced subgraphs of $H^+$ and $H^-$ on the cliques, we identify that the edge $(5,6)$ is in the true graph.  Furthermore, if $(3,5) \notin H^+$, then the edge $(4,5)$ can also be identified to be in the true graph.

\begin{mdframed}
\textbf{Algorithm~2}: Find Active Vertices
\begin{itemize}
\item \textbf{Inputs:} $\Xf$, $\widehat{E}$ and $\widehat{F}$. 
\item  {Initialize:} $A \gets \emptyset$ 
\item  Estimate $H^+$ and $H^-$ (see Remark~\ref{sec:hh}).
\item  ${\cal V} \gets $ Cliques in chordal graph of $H^+$
\item For each clique $V_k \in {\cal V}$
\begin{itemize}
\item  If $H^{+}[V_k] \ne H^{-}[V_k]$, then $A \gets A \cup V_k$
\item  If $H^{+}[V_k] = H^{-}[V_k]$, then $\widehat{E} \gets$ Edges of $H^+[V_k]$, $\widehat{F} \gets$ Nonedges of $H^+[V_k]$
\end{itemize}
\item  \textbf{Return} $A, \widehat{E}$ and $\widehat{F}$
\end{itemize}
\end{mdframed}

\begin{figure}
\scalebox{0.48} % Change this value to rescale the drawing.
{
\begin{pspicture}(0,-1.2399999)(16.55,1.2399999)
\definecolor{color0b}{rgb}{0.8,0.8,1.0}
\psline[linewidth=0.04cm](12.2,-0.3200001)(12.64,-0.70000017)
\pscircle[linewidth=0.02,dimen=outer,fillstyle=solid,fillcolor=color0b](12.91,-0.88999987){0.35}
\psline[linewidth=0.04cm](14.034498,-0.060000125)(16.28,-0.059999995)
\psline[linewidth=0.04cm](7.2744985,-0.8800001)(8.974499,-0.22000013)
\psline[linewidth=0.04cm](6.3344984,-0.020000126)(10.4,-0.019999996)
\pscircle[linewidth=0.02,dimen=outer,fillstyle=solid,fillcolor=color0b](9.150001,-0.019999836){0.35}
\pscircle[linewidth=0.02,dimen=outer,fillstyle=solid,fillcolor=color0b](6.13,-0.009999835){0.35}
\pscircle[linewidth=0.02,dimen=outer,fillstyle=solid,fillcolor=color0b](0.35,0.030000165){0.35}
\pscircle[linewidth=0.02,dimen=outer,fillstyle=solid,fillcolor=color0b](1.2500001,-0.8099999){0.35}
\psline[linewidth=0.04cm](0.58,0.28000003)(1.02,0.66)
\psline[linewidth=0.04cm](0.54,-0.24000011)(0.9800001,-0.6200002)
\psline[linewidth=0.04cm](1.5000001,0.67999995)(1.9799998,0.26)
\psline[linewidth=0.04cm](1.9599999,-0.18000011)(1.52,-0.6000001)
\psline[linewidth=0.04cm](2.2744982,0.019999875)(4.62,0.020000005)
\pscircle[linewidth=0.02,dimen=outer,fillstyle=solid,fillcolor=color0b](3.3700001,0.020000165){0.35}
\pscircle[linewidth=0.02,dimen=outer,fillstyle=solid,fillcolor=color0b](4.54,0.020000165){0.35}
\pscircle[linewidth=0.02,dimen=outer,fillstyle=solid,fillcolor=color0b](2.1899998,0.020000165){0.35}
\pscircle[linewidth=0.02,dimen=outer,fillstyle=solid,fillcolor=color0b](1.2500001,0.88999987){0.35}
\usefont{T1}{ptm}{m}{n}
\rput(1.2144983,0.90499985){\Large 1}
\usefont{T1}{ptm}{m}{n}
\rput(0.3344983,0.044999875){\Large 2}
\usefont{T1}{ptm}{m}{n}
\rput(1.2644984,-0.79500014){\Large 3}
\usefont{T1}{ptm}{m}{n}
\rput(2.1744983,0.024999874){\Large 4}
\usefont{T1}{ptm}{m}{n}
\rput(3.3544984,0.024999874){\Large 5}
\usefont{T1}{ptm}{m}{n}
\rput(4.5244985,0.024999874){\Large 6}
\pscircle[linewidth=0.02,dimen=outer,fillstyle=solid,fillcolor=color0b](7.03,-0.8499999){0.35}
\psline[linewidth=0.04cm](6.36,0.24000004)(6.8,0.62)
\psline[linewidth=0.04cm](6.32,-0.28000012)(6.76,-0.66000015)
\psline[linewidth=0.04cm](7.28,0.64)(7.7599998,0.22)
\psline[linewidth=0.04cm](7.74,-0.22000012)(7.3,-0.64000005)
\pscircle[linewidth=0.02,dimen=outer,fillstyle=solid,fillcolor=color0b](10.32,-0.019999836){0.35}
\pscircle[linewidth=0.02,dimen=outer,fillstyle=solid,fillcolor=color0b](7.03,0.8499999){0.35}
\usefont{T1}{ptm}{m}{n}
\rput(6.9944983,0.8649999){\Large 1}
\usefont{T1}{ptm}{m}{n}
\rput(7.0444984,-0.8350001){\Large 3}
\usefont{T1}{ptm}{m}{n}
\rput(10.304499,-0.015000125){\Large 6}
\pscircle[linewidth=0.02,dimen=outer,fillstyle=solid,fillcolor=color0b](7.97,-0.019999836){0.35}
\usefont{T1}{ptm}{m}{n}
\rput(7.9544983,0.004999874){\Large 4}
\usefont{T1}{ptm}{m}{n}
\rput(6.114498,0.004999874){\Large 2}
\usefont{T1}{ptm}{m}{n}
\rput(9.134499,-0.015000125){\Large 5}
\pscircle[linewidth=0.02,dimen=outer,fillstyle=solid,fillcolor=color0b](15.03,-0.059999835){0.35}
\pscircle[linewidth=0.02,dimen=outer,fillstyle=solid,fillcolor=color0b](12.01,-0.049999837){0.35}
\psline[linewidth=0.04cm](12.24,0.20000005)(12.68,0.58000004)
\pscircle[linewidth=0.02,dimen=outer,fillstyle=solid,fillcolor=color0b](16.2,-0.059999835){0.35}
\pscircle[linewidth=0.02,dimen=outer,fillstyle=solid,fillcolor=color0b](12.91,0.8099999){0.35}
\usefont{T1}{ptm}{m}{n}
\rput(12.874498,0.82499987){\Large 1}
\usefont{T1}{ptm}{m}{n}
\rput(12.924499,-0.8750001){\Large 3}
\usefont{T1}{ptm}{m}{n}
\rput(16.184498,-0.055000126){\Large 6}
\pscircle[linewidth=0.02,dimen=outer,fillstyle=solid,fillcolor=color0b](13.849999,-0.059999835){0.35}
\usefont{T1}{ptm}{m}{n}
\rput(13.834498,-0.035000127){\Large 4}
\usefont{T1}{ptm}{m}{n}
\rput(11.994498,-0.035000127){\Large 2}
\usefont{T1}{ptm}{m}{n}
\rput(15.014499,-0.055000126){\Large 5}
\usefont{T1}{ptm}{m}{n}
\rput(3.0144982,-0.8850001){\LARGE $ G^*$}
\usefont{T1}{ptm}{m}{n}
\rput(9.204498,-0.92500013){\LARGE $ H^+$}
\usefont{T1}{ptm}{m}{n}
\rput(14.924499,-0.90500015){\LARGE $ H^-$}
\end{pspicture} 
}
\caption{Illustration of Algorithm~2.}
\label{fig:h5}
\end{figure}


\begin{remark}
\label{sec:hh}
An important step in Algorithm~2 is computing the graphs $H^+$ and $H^-$.  Recall that we want $G^* \subseteq H^+$ and $H^- \subseteq G^*$.  In our numerical simulations, we use stability selection \cite{meinshausen2010stability}, with appropriate thresholds, to select $H^+$ and $H^-$.  We refer to Appendix~\ref{app:a} for more details.
\end{remark}

\begin{remark}
Both Algorithms~1 and 2 are independent of the choice of the graphical model selection algorithm.  Furthermore, the computational complexity of the active learning algorithm is dominated by the computation of $H^+$ and $H^-$.  Thus, the overall complexity is roughly $O (K {\cal I} )$, where $O({\cal I})$ is the complexity of graphical model selection.  As will be clear from the theoretical results and the numerical simulations, the additional computations required for active learning is a small price to pay for the potential benefits of using active learning for improved graph estimation.
\end{remark}

%Further, we use the   Algorithm~\ref{alg:activenodes} is explained as follows:
%\setlist[itemize]{leftmargin=0.4cm,itemsep=0.05cm,topsep=0.05cm,parsep=0.05cm}
%\begin{itemize}[label=$\bullet$]
%\item The first step (Line~2) is to compute graphs $H^+$ and $H^-$ using Algorithm~\ref{alg:huhl} such that $G^* \subseteq H^+$ and $H^- \subseteq G^*$.  In other words, $H^+$ is a superset of $G^*$ and $H^-$ is a subset of $G^*$.  A discussion about Algorithm~\ref{alg:huhl} is presented later.  
%\item Next (Line~3), we find all cliques in a chordal graph of $H^+$. As it turns out, these cliques correspond to the clusters in a junction tree representation of $H^+$.
%These cliques can be easily computed in linear time.
%\item We now want to examine the cliques $\{V_k\}$ to determine the set of active vertices.  We say a vertex is \textit{active} if future measurements are required from that vertex to estimate edges in the graph.
%
%We now use the following key observation: if there exists even one edge in $H^+[V_k]$ that is not in $G^*$, then all vertices in $V_k$ are necessarily active.  On the other hand, if all edges and non-edges in $H^+[V_k]$ and $G^*[V_k]$ are equal
%Since $H^-$ is a subset of $G^*$, the condition above is clearly satisfied when $H^+[V_k] \ne H^-[V_k]$.  We check for this condition in Line~6 of Algorithm~\ref{alg:activenodes}.  
%\item If the condition in Line~6 is not satisfied, which means that $H^+[V_k] = H^-[V_k]$, then it must be the case that $H^+[V_k] = H^-[V_k] =  G^*[V_k]$.  We update the set of edges $\widehat{E}$ and $\widehat{F}$ accordingly.
%\end{itemize}

\section{Conditional Independence Testing}
\begin{mdframed}
\textbf{Algorithm~3.} $\CIT$($\Xf^n_V$, $\kappa$, $\eta$): Conditional independence testing for graphical model selection
\begin{itemize}
\item \textbf{Inputs:} $\Xf^n_V$: $n$ i.i.d. measurements; $\kappa$: An integer that controls the computational complexity; $\tau_n$: threshold that controls the sparsity of graph.
\item $\widehat{G} \gets $ Complete graph over $p$ vertices.
\item \textbf{for} each $(i,j) \in E(\widehat{G})$
\begin{itemize}
\item {If} $\exists$ $S$, $|S| \le \kappa$, s.t. $| \widehat{\rho}_{ij|S} | \le \tau_n$, then delete edge $(i,j)$ from $\widehat{G}$.
\end{itemize}
\item \textbf{Return} $\widehat{G}$.
\end{itemize}
\end{mdframed}
In this section, we review a graphical model selection algorithm to study the advantages of our active learning algorithm.  In particular, we review Algorithm~3, called $\CIT$, which uses conditional independence tests to estimate a graph.  This method is not new, and goes back to the SGS-Algorithm \cite{SGSAlgorithm} for learning Bayesian networks.  The conditional independence test used in $\CIT$ is to threshold the empirical conditional correlation coefficient (see (A3) for definition).  Recently, \cite{AnimaTanWillsky2011a,AnimaTanWillsky2011b} studied the regimes under which a conditional independence test based graphical model selection algorithm has attractive sample complexity.  Although the computational complexity of Algorithm~3 is $O(p^{\kappa+2})$, where $\kappa$ is an input to the algorithm, the PC-Algorithm \cite{PCAlgorithm} can be used to significantly speed up the computations.

To characterize the performance of Algorithm~3, we consider the following assumptions.
\setlist[enumerate]{leftmargin=0.9cm,itemsep=0.05cm,topsep=0.05cm,parsep=0.05cm}
\begin{enumerate}[label=({A}\arabic*)]
\item $P_X$ is a multivariate normal distribution with mean zero and covariance $\Sigma$ such that $\max_{i,i} \Sigma_{i,i} \le M < \infty$, where $M$ is a constant.
\item $X_i \ind X_j | X_S$ $\Longleftrightarrow$ $i$ and $j$ are separated by $S$.
\item $\sup |\rho_{ij|S}| \!<\! 1$, where $\rho_{ij|S} \!= \!\frac{\Sigma_{ij|S}}{\sqrt{\Sigma_{i,i|S} \Sigma_{j,j|S}}}$ and $\Sigma_{i,i|S}\! = \!\Sigma_{i,j} - \!\Sigma_{i,S} \Sigma_{S,S}^{-1} \Sigma_{S,j}$.  Note that $\widehat{\rho}_{ij|S}$ is computed using the empirical covariance matrix.
\item If $(i,j) \notin E(G^*)$, there exists a \textit{minimal separator} of size $\eta$ that separates $i$ and $j$.
\end{enumerate}

The Gaussian assumption in (A1) is for simplicity.  We can use the results in \cite{AnimaTanWillsky2011a} to generalize the analysis to discrete distributions.  Assumption~(A2) is sometimes called the faithfulness condition.  The parameter $\rho_{ij|S}$ in (A3) is the conditional correlation coefficient.  Whenever $(i,j) \notin G^*$, then $\rho_{ij|S} = 0$.  Moreover, using (A2), we have that $\rho_{ij|S} = 0$ if and only if $(i,j) \notin G^*$.  This justifies the use of the empirical conditional correlation coefficient, $\widehat{\rho}_{ij|S}$, to test for conditional independence in Algorithm~1.  The \textit{minimal separator} in (A4) is defined as a separator $S$ for $(i,j) \notin E(G^*)$ such that no proper subset of $S$ separates $i$ and $j$.  The parameter $\eta$ in (A4) implicitly places limits on the sparsity of the graph.  For example, we can easily upper bound $\eta$ by the maximum degree of the graph.  However, for many graphs, this upper bound is very loose.  For example, $\eta = 1$ for trees, but the maximum degree can be as large as $p-1$.  Finally, we define the minimal conditional correlation coefficient as follows:
\begin{align}
\rho_{min} \defn \min_{ (i,j) \in G^*, |S| \le \eta}  \; |\rho_{ij|S}| \,. \label{eq:rhomin}
\end{align}
Now, suppose we are given $n$ i.i.d measurements $\Xf^n_V = (X^{(1)}_V,\ldots,X^{(n)}_V)$ drawn from $P_X$.  We work within a high-dimensional framework so that the various problem parameters can scale arbitrarily as $p \rightarrow \infty$.
%
%\begin{align}
%\tau_n > \frac{c_1 (\eta+2) \log p}{n-\eta} \,,\quad \tau_n = 0.9 \rho_{\min} \,,
%\label{eq:taun}
%\end{align}
%where $c_1>0$ is a constant.  
%
We have the following theorem.
\begin{theorem}
\label{thm:firstthm}
Suppose Assumptions (A1)-(A4) hold and let $\widehat{G} = \CIT(\Xf_V^n,\eta,\tau_p)$, where $\tau_p = 0.9 \rho_{\min}$.  For constants $c_1,c_2 > 0$, if $\rho_{\min} > \frac{c_1 (\eta+2) \log p}{(n-\eta)}$ and
\[
n \ge \eta + c_2 \rho_{\min}^{-2} (\eta+2) \log(p)\,,
\] 
then $\Pb(\widehat{G} = G^*) \rightarrow 1$ as $p \rightarrow \infty$.
\end{theorem}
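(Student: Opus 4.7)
The strategy is to reduce correctness of $\CIT$ to a uniform concentration statement for the empirical conditional correlations, and then to apply a standard tail bound together with a union bound over all candidate triples $(i,j,S)$.

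First I would formulate the two events that, taken together, imply $\widehat{G}=G^*$. An edge $(i,j)\in E(G^*)$ survives $\CIT$ exactly when $|\widehat{\rho}_{ij|S}|>\tau_p=0.9\rho_{\min}$ for every $S$ with $|S|\le\eta$, while a non-edge $(i,j)\notin E(G^*)$ gets deleted as long as there is some witness $S$ with $|S|\le\eta$ and $|\widehat{\rho}_{ij|S}|\le\tau_p$. For the edge case, (A3) together with the definition \eqref{eq:rhomin} gives $|\rho_{ij|S}|\ge\rho_{\min}$; for the non-edge case, (A4) supplies a minimal separator $S^*$ of size at most $\eta$, and then (A2) yields $\rho_{ij|S^*}=0$. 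Hence both events reduce to controlling $|\widehat{\rho}_{ij|S}-\rho_{ij|S}|$ uniformly over all $(i,j)$ and all $S$ of size at most $\eta$. Specifically, if this deviation is at most $0.05\rho_{\min}$ on every such triple, then on true edges $|\widehat{\rho}_{ij|S}|\ge\rho_{\min}-0.05\rho_{\min}>\tau_p$, and on the separating set for a non-edge $|\widehat{\rho}_{ij|S^*}|\le 0.05\rho_{\min}<\tau_p$.

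Next I would invoke a concentration inequality for the empirical conditional correlation coefficient under (A1)--(A3). The standard result (as used by Anandkumar--Tan--Willsky) gives, for some constants $c,c'>0$ depending only on $M$ and on $\sup|\rho_{ij|S}|$, a bound of the form
\[
\Pb\!\left(|\widehat{\rho}_{ij|S}-\rho_{ij|S}|>\epsilon\right)\le c\,(n-|S|)\exp\!\bigl(-c'(n-|S|)\epsilon^2\bigr),
\]
valid whenever $\epsilon$ is below a universal constant and $n-|S|$ is large enough. The Gaussian assumption lets us express $\widehat{\rho}_{ij|S}$ in terms of a Wishart-distributed submatrix and apply well-known tail bounds for sample partial correlations; the uniform bound $\sup|\rho_{ij|S}|<1$ in (A3) keeps the Jacobian of $(\Sigma_{ij|S},\Sigma_{ii|S},\Sigma_{jj|S})\mapsto\rho_{ij|S}$ bounded, which is the step I expect to be the main technical obstacle since it requires that the relevant denominators stay away from zero throughout the concentration argument.

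Finally I would take a union bound. The number of triples $(i,j,S)$ with $|S|\le\eta$ is at most $\binom{p}{2}\binom{p}{\eta}=O(p^{\eta+2})$, so choosing $\epsilon=0.05\rho_{\min}$ the probability of failure is bounded by
\[
c\,p^{\eta+2}(n-\eta)\exp\!\bigl(-c'(n-\eta)\rho_{\min}^2/400\bigr).
\]
Requiring this to vanish as $p\to\infty$ yields exactly a condition of the form $n\ge\eta+c_2\,\rho_{\min}^{-2}(\eta+2)\log p$, which is the stated sample complexity; the side condition $\rho_{\min}>c_1(\eta+2)\log p/(n-\eta)$ is precisely what is needed to place $\epsilon=0.05\rho_{\min}$ in the valid regime of the concentration inequality. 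Combining these gives $\Pb(\widehat{G}=G^*)\to 1$.
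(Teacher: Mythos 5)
Your proposal is correct and follows essentially the same route as the paper's proof in Appendix~\ref{app:b}: a concentration bound on $|\widehat{\rho}_{ij|S}-\rho_{ij|S}|$ (from Kalisch--B\"uhlmann and Anandkumar et al.), a union bound over the $O(p^{\eta+2})$ triples $(i,j,S)$ with $|S|\le\eta$, and a split into the false-inclusion and false-exclusion events, with (A2) and (A4) supplying the zero-correlation witness separator for non-edges. The only (immaterial) differences are that you use a single uniform deviation level $0.05\rho_{\min}$ where the paper uses separate thresholds $\tau_n$ and $\rho_{\min}-\tau_n$ for the two error events, and your handling of non-edges via the specific separator $S^*$ is if anything slightly cleaner than the paper's union over all $S$.
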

The proof of Theorem~\ref{thm:firstthm}, outlined in Appendix~\ref{app:b}, is based on methods in \cite{KalischBuhlmann2007} and \cite{AnimaTanWillsky2011b}.  We note that Theorem~\ref{thm:firstthm} differs slightly from the results on conditional covariance based testing in \cite{AnimaTanWillsky2011b}.  In particular, the result in \cite{AnimaTanWillsky2011b} is based on local separators, while the result in Theorem~\ref{thm:firstthm} is based on exact separators between non-edges of the graph.  In general, the size of a local separator is less than the size of the exact separator.  
Although we use Theorem~\ref{thm:firstthm} to analyze our active learning algorithms, our analysis can be easily derived using the results from \cite{AnimaTanWillsky2011b}.

%We note that although the results in Theorem~\ref{thm:firstthm} are only sufficient for accurate graph recovery, the bounds on the number of observations nearly match the necessary conditions for accurate graph recovery outlined in \cite{wang2010information}.  In the next Section, we theoretically study the benefits of active learning.

\section{ A Graph Family with Two Clusters}
\label{sec:graphfamily}

In this section, we define a family graphical model family to highlight the advantages of active learning.  In the definitions that follow, operations over graphs correspond to operations over the vertices and edges. 

\noindent
${\cal G}_{p,p_1,p_2,\eta,d} \defn$ Family of graphs over $p$ vertices such that $G = G_1 \cup G_2$, where $G_1$ and $G_2$ are characterized as follows.  Arbitrarily select two sets of vertices $V_1$ and $V_2$, such that $V_1 \cup V_2 = V$ and $T = V_1 \cap V_2$, where $|T| \le 1$.  Let ${\cal G}_{p,\eta,d}$ be the set of all graphs over $p$ vertices with maximum degree $d$ and minimal separator of size less than or equal to $\eta$.  Assume $G_k = (V_k,E(G_k)) \in {\cal G}_{p_k,\eta,d}$, for $k = 1,2$, where $p_k = |V_k|$.  Note that, since $|T| \le 1$, $G[V_1] = G_1$ and $G[V_2] = G_2$.

\smallskip

\noindent
$\text{$\Theta(G)$}\defn $ Inverse covariance matrix of a zero mean Gaussian graphical model  on a graph $G$.

\smallskip

\noindent
${\cal G}_{p,p_1,p_2,\eta,d}(\theta_1,\theta_2) \defn$ Set of all possible inverse covariance matrices $\Theta(G)$, where $G \in {\cal G}_{p,p_1,p_2,\eta,d} $, such that
$\displaystyle{\min_{(i,j) \in E(G[V_k]) } \frac{|\Theta_{ij}(G)|}{\sqrt{\Theta_{ii}(G) \Theta_{jj}(G)}} \!\ge\! \theta_k}$, for $k = 1,2$, where $\theta_1$ and $\theta_2$ quantify the minimal conditional covariances over $V_1$ and $V_2$ given all other variables.

\smallskip


\begin{figure}
\begin{center}
% Generated with LaTeXDraw 2.0.8
% Thu Oct 17 15:46:26 CDT 2013
% \usepackage[usenames,dvipsnames]{pstricks}
% \usepackage{epsfig}
% \usepackage{pst-grad} % For gradients
% \usepackage{pst-plot} % For axes
\scalebox{0.77} % Change this value to rescale the drawing.
{
\begin{pspicture}(0,-1.1695508)(10.45,1.1295508)
\definecolor{color0b}{rgb}{0.6,0.6,1.0}
\definecolor{color937b}{rgb}{0.6,0.6,1.0}
\definecolor{color1098b}{rgb}{0.8,0.8,0.8}
\psline[linewidth=0.02cm](1.58,0.4695508)(3.02,0.4695508)
\psline[linewidth=0.02cm](1.0,0.7895508)(1.36,0.5295508)
\psline[linewidth=0.02cm](0.8,0.7895508)(0.44,0.5295508)
\psline[linewidth=0.02cm](0.42,0.38955078)(0.8,0.16955078)
\pscircle[linewidth=0.04,linecolor=color0b,dimen=outer,fillstyle=solid,fillcolor=color0b](0.9,0.8495508){0.12}
\pscircle[linewidth=0.04,linecolor=color0b,dimen=outer,fillstyle=solid,fillcolor=color0b](0.9,0.12955078){0.12}
\pscircle[linewidth=0.04,linecolor=color0b,dimen=outer,fillstyle=solid,fillcolor=color0b](0.34,0.4695508){0.12}
\psline[linewidth=0.02cm](1.36,0.4095508)(1.0,0.18955079)
\pscircle[linewidth=0.04,linecolor=color0b,dimen=outer,fillstyle=solid,fillcolor=color0b](1.46,0.4695508){0.12}
\pscircle[linewidth=0.04,linecolor=color0b,dimen=outer,fillstyle=solid,fillcolor=color0b](2.02,0.4695508){0.12}
\pscircle[linewidth=0.04,linecolor=color0b,dimen=outer,fillstyle=solid,fillcolor=color0b](2.58,0.4695508){0.12}
\pscircle[linewidth=0.04,linecolor=color0b,dimen=outer,fillstyle=solid,fillcolor=color0b](3.14,0.4695508){0.12}
\psellipse[linewidth=0.02,linestyle=dashed,dash=0.16cm 0.16cm,dimen=outer](0.87,0.47955078)(0.87,0.61)
\psellipse[linewidth=0.02,linestyle=dashed,dash=0.16cm 0.16cm,dimen=outer](2.31,0.4595508)(1.11,0.29)
\usefont{T1}{ptm}{m}{n}
\rput(1.6354492,-0.9054492){\large (a)}
\psline[linewidth=0.02cm](5.02,0.8295508)(4.78,0.44955078)
\psline[linewidth=0.02cm](6.04,0.34955078)(6.52,0.4695508)
\psline[linewidth=0.02cm](6.1,0.74955076)(6.02,0.38955078)
\pscircle[linewidth=0.04,linecolor=color0b,dimen=outer,fillstyle=solid,fillcolor=color0b](6.14,0.8295508){0.12}
\pscircle[linewidth=0.04,linecolor=color0b,dimen=outer,fillstyle=solid,fillcolor=color0b](6.54,0.4695508){0.12}
\pscircle[linewidth=0.04,linecolor=color0b,dimen=outer,fillstyle=solid,fillcolor=color0b](6.02,0.32955077){0.12}
\pscircle[linewidth=0.04,linecolor=color0b,dimen=outer,fillstyle=solid,fillcolor=color0b](5.04,0.8495508){0.12}
\psline[linewidth=0.02cm](4.78,0.4295508)(5.08,0.08955079)
\pscircle[linewidth=0.04,linecolor=color0b,dimen=outer,fillstyle=solid,fillcolor=color0b](5.08,0.08955079){0.12}
\psline[linewidth=0.02cm](4.42,0.7895508)(4.78,0.44955078)
\pscircle[linewidth=0.04,linecolor=color0b,dimen=outer,fillstyle=solid,fillcolor=color0b](4.42,0.7895508){0.12}
\psline[linewidth=0.02cm](4.76,0.4295508)(4.46,0.08955079)
\pscircle[linewidth=0.04,linecolor=color0b,dimen=outer,fillstyle=solid,fillcolor=color0b](4.46,0.08955079){0.12}
\pscircle[linewidth=0.04,linecolor=color0b,dimen=outer,fillstyle=solid,fillcolor=color0b](4.78,0.44955078){0.12}
\usefont{T1}{ptm}{m}{n}
\rput(5.455449,-0.9054492){\large (b)}
\psellipse[linewidth=0.02,linestyle=dashed,dash=0.16cm 0.16cm,dimen=outer](4.76,0.4595508)(0.72,0.67)
\psellipse[linewidth=0.02,linestyle=dashed,dash=0.16cm 0.16cm,dimen=outer](6.23,0.55955076)(0.55,0.45)
\usefont{T1}{ptm}{m}{n}
\rput(0.8479004,-0.3454492){$V_1$}
\usefont{T1}{ptm}{m}{n}
\rput(2.3624024,-0.065449215){$V_2$}
\usefont{T1}{ptm}{m}{n}
\rput(4.7479,-0.4454492){$V_1$}
\usefont{T1}{ptm}{m}{n}
\rput(6.2079,-0.08544922){$V_2$}
\pscircle[linewidth=0.04,linecolor=color937b,dimen=outer,fillstyle=solid,fillcolor=color937b](8.56,-0.070449196){0.04}
\pscircle[linewidth=0.04,linecolor=color937b,dimen=outer,fillstyle=solid,fillcolor=color937b](8.76,-0.070449196){0.04}
\pscircle[linewidth=0.04,linecolor=color937b,dimen=outer,fillstyle=solid,fillcolor=color937b](8.96,-0.070449196){0.04}
\pscircle[linewidth=0.04,linecolor=color937b,dimen=outer,fillstyle=solid,fillcolor=color937b](9.16,-0.070449196){0.04}
\pscircle[linewidth=0.04,linecolor=color937b,dimen=outer,fillstyle=solid,fillcolor=color937b](9.36,-0.070449196){0.04}
\pscircle[linewidth=0.04,linecolor=color937b,dimen=outer,fillstyle=solid,fillcolor=color937b](9.56,-0.070449196){0.04}
\pscircle[linewidth=0.04,linecolor=color937b,dimen=outer,fillstyle=solid,fillcolor=color937b](9.76,-0.070449196){0.04}
\pscircle[linewidth=0.04,linecolor=color937b,dimen=outer,fillstyle=solid,fillcolor=color937b](9.96,-0.070449196){0.04}
\pscircle[linewidth=0.04,linecolor=color937b,dimen=outer,fillstyle=solid,fillcolor=color937b](10.16,-0.070449196){0.04}
\pscircle[linewidth=0.04,linecolor=color937b,dimen=outer,fillstyle=solid,fillcolor=color937b](10.36,-0.070449196){0.04}
\psframe[linewidth=0.02,dimen=outer,fillstyle=solid,fillcolor=color1098b](10.4,0.5695508)(8.52,0.029550802)
\usefont{T1}{ptm}{m}{n}
\rput(8.0279,0.3345508){$n_0$}
\psframe[linewidth=0.02,dimen=outer,fillstyle=solid,fillcolor=color1098b](9.2,1.0695508)(8.52,0.5695508)
\usefont{T1}{ptm}{m}{n}
\rput(8.0479,0.8545508){$n_1$}
\usefont{T1}{ptm}{m}{n}
\rput(9.045449,-0.9054492){\large (c)}
\usefont{T1}{ptm}{m}{n}
\rput(8.8479,-0.46544918){$\widehat{V}_1$}
\usefont{T1}{ptm}{m}{n}
\rput(9.8279,-0.46544918){$\widehat{V}_2$}
\psline[linewidth=0.02cm](8.46,-0.070449196)(8.46,-0.1304492)
\psline[linewidth=0.02cm](8.46,-0.1304492)(9.24,-0.1304492)
\psline[linewidth=0.02cm](9.24,-0.1304492)(9.24,-0.070449196)
\psline[linewidth=0.02cm](9.08,-0.1504492)(9.08,-0.2104492)
\psline[linewidth=0.02cm](9.08,-0.2104492)(10.44,-0.2104492)
\psline[linewidth=0.02cm](10.44,-0.2104492)(10.44,-0.1504492)
\end{pspicture} 
}
\end{center}
\caption{(a)-(b) Examples of the graphs in Section 5. (c) Measurement scheme in Algorithms~4 and 5.}
\label{fig:firsteg}
\end{figure}

Throughout this paper, we assume that $G^* \in {\cal G}_{p,p_1,p_2,\eta,d}$ and that the Gaussian graphical model has zero mean with inverse covariance $\Theta(G^*) \in  {\cal G}_{p,p_1,p_2,\eta,d}(\theta_1,\theta_2)$.  From the definition, it is clear that $G^*$ admits a two cluster decomposition, as in Figure~\ref{fig:firsteg}(a)-(b), where there exists a set of vertices $T$ that separates the vertices $V_1 \backslash T$ and $V_2 \backslash T$.  In words, this means that all paths from a vertex in $V_1 \backslash T$ to a vertex in $V_2 \backslash T$ pass through $T$.  When $T = \emptyset$, there are no edges between $V_1$ and $V_2$.  Note that the assumption $|T| \le 1$ is only enforced to simplify our analysis; see Remark~\ref{rem:r} for more details.

Next, we define three parameters on $\Theta(G^*)$ that will be important in the analysis of our algorithm:
\begin{align}
\rho_0 &\defn \min_{(i,j) \in E(G^*), |S| \le |T|} |\rho_{ij|S}| \label{eq:rho} \,,\\
\rho_1 &\defn \min_{(i,j) \in E(G^*[V_1]), |S| \le \eta, S \subset V_1} |\rho_{ij|S}| \,, \label{eq:rho1}\\
\rho_2 &\defn \min_{(i,j) \in E(G^*[V_2]), |S| \le \eta, S \subset V_2} |\rho_{ij|S}| \,. \label{eq:rho2}
\end{align}
Informally, $\rho_0$ quantifies the difficulty in learning the two cluster decomposition, $\rho_1$ quantifies the difficulty in learning the edges over $V_1$, and $\rho_2$ quantifies the difficulty in learning the edges over $V_2$.

Finally, we use the results in \cite{AnimaTanWillsky2011b} to  relate the parameters $\rho_1$ and $\rho_2$ to $\theta_1$ and $\theta_2$, respectively.  In what follows, the various parameters defined on the graphical model are assumed to scale with $p$ and we use the following notations: $\|M\|$ is the spectral norm of a matrix and $f_p = \Omega(g_p)$ means that for sufficiently large $p$, there exists a constant $c$ such that $f_p \ge c g_p$.

\begin{theorem}[$\!\!\!$\cite{AnimaTanWillsky2011b}]
\label{thm:rhotheta}
Let $\Theta(G^*) \in {\cal G}_{p,p_1,p_2,\eta,d}(\theta_1,\theta_2)$.
Suppose $\Theta_{ii}(G^*) = 1$ $\forall$ $i$ and $\Theta_{ij}(G^*) \le 0$ for $i \ne j$.  If $\| I - |\Theta(G^*)|\| = \alpha < 1$, where $\alpha$ is a constant, then $\rho_1 = \Omega(\theta_1)$ and $\rho_2 = \Omega(\theta_2)$.
\end{theorem}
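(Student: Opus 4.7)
The plan is to follow the walk-sum analysis for walk-summable Gaussian graphical models developed in \cite{AnimaTanWillsky2011b}. The hypothesis $\|I - |\Theta(G^*)|\| = \alpha < 1$ is precisely the walk-summability condition, which implies that conditional covariances can be written as absolutely convergent sums of walk weights in $G^*$. The sign hypothesis $\Theta_{ij} \le 0$ for $i \ne j$ (together with $\Theta_{ii}=1$) further guarantees that all walk weights are nonnegative, so no cancellations can destroy the lower bound.

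First, I would express the conditional correlation in terms of submatrices of $\Theta(G^*)$. For a Gaussian graphical model, conditioning on $X_S$ yields another Gaussian whose precision on the remaining variables is the principal submatrix $\Theta_{V \setminus S, V \setminus S}(G^*)$. Hence $\Sigma_{ij|S} = [\Theta_{V \setminus S, V \setminus S}^{-1}]_{ij}$, and since eigenvalues of principal submatrices interlace those of the full matrix, $\|I - |\Theta_{V \setminus S, V \setminus S}|\| \le \alpha$, so walk-summability is inherited by every conditional model one needs to consider.

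Second, I would apply the walk-sum identity $\Sigma_{ij|S} = \sum_{w: i \to j,\; w \cap S = \emptyset} \phi(w)$, where $\phi(w)$ is the weight with respect to $R = I - \Theta(G^*)$. All $\phi(w) \ge 0$ by the sign assumption. The length-one walk along the edge $(i,j) \in E(G^*[V_k])$ contributes exactly $|\Theta_{ij}| \ge \theta_k$, and walks of length $\ge 2$ are bounded in total by $\alpha^2/(1-\alpha)$ times a constant depending on the degree and on neighbor-weight normalization. A parallel walk-sum bound for the diagonal gives $\Sigma_{ii|S}, \Sigma_{jj|S} \le 1/(1-\alpha)$, which normalizes the correlation.

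The main obstacle is to show that the contribution of the longer walks does not dominate $|\Theta_{ij}|$, so that the direct-edge term survives in the lower bound; this is where the walk-summability constant $\alpha < 1$ and the uniformity over $|S| \le \eta$ are used crucially. Once this comparison is established, we obtain $|\Sigma_{ij|S}| = \Omega(|\Theta_{ij}|) = \Omega(\theta_k)$ uniformly over $S \subset V_k$ with $|S| \le \eta$. Dividing by the uniformly bounded $\sqrt{\Sigma_{ii|S}\Sigma_{jj|S}}$ yields $|\rho_{ij|S}| = \Omega(\theta_k)$, and taking the minimum over admissible $(i,j,S)$ gives $\rho_k = \Omega(\theta_k)$ for $k=1,2$, as claimed.
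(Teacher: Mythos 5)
This theorem is imported, not proved, in the paper: it is stated with a citation to \cite{AnimaTanWillsky2011b}, and the text explicitly defers to Lemma~14 of that reference, so there is no in-paper proof to compare against. Your walk-sum sketch is a correct reconstruction of the argument behind the cited result, and all the key ingredients are right: the conditional precision is the principal submatrix $\Theta_{V\setminus S,V\setminus S}$, walk-summability passes to principal submatrices, and $\Sigma_{\cdot\cdot|S}$ expands as a convergent Neumann/walk series in $R=I-\Theta$.

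One remark on emphasis: what you call the ``main obstacle''---showing that walks of length $\ge 2$ do not swamp the direct-edge term---is not actually an obstacle under the hypotheses as stated. Since $\Theta_{ij}\le 0$ off-diagonal and $\Theta_{ii}=1$, the matrix $R_{U,U}$ (with $U=V\setminus S$) is entrywise nonnegative, so every term of $\Sigma_{ij|S}=\sum_{k\ge 1}[(R_{U,U})^k]_{ij}$ is nonnegative and the $k=1$ term alone gives $\Sigma_{ij|S}\ge |\Theta_{ij}|\ge\theta_k$; no upper bound on the longer walks is needed for the numerator. You only need an upper bound on the diagonal, $\Sigma_{ii|S}=\sum_{k\ge 0}[(R_{U,U})^k]_{ii}\le\sum_{k\ge 0}\alpha^k=1/(1-\alpha)$, which immediately yields $|\rho_{ij|S}|\ge(1-\alpha)\theta_k$ uniformly over all admissible $S$. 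The cancellation issue you flag is genuinely the hard part only in the general walk-summable setting (arbitrary signs), which is what Lemma~14 of \cite{AnimaTanWillsky2011b} handles and what the paper alludes to when it says the sign condition can be relaxed; under the attractive-model assumption in the theorem statement your argument closes cleanly and is, if anything, simpler than you suggest.
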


Theorem~\ref{thm:rhotheta} shows that $\rho_1$ and $\rho_2$ are asymptotically lower bounded by $c\theta_1$ and $c\theta_2$, respectively, where $c$ is an appropriate constant.  The condition on $\Theta(G^*)$, although restrictive, can be generalized so that $\Theta(G^*)$ is a walk-summable graphical model \cite{malioutov2006walk}.  For simplicity, we avoid stating the conditions and refer to Lemma~14 in \cite{AnimaTanWillsky2011b} for more technical details.

%\begin{remark}
%\label{rem:vv1}
%The assumption that $|T| \le 1$ is only placed to simplify the analysis that we present in subsequent sections.  In practice, $|T| \le \eta$, and our analysis can be modified accordingly at the cost of some additional technicalities.
%\end{remark}

\section{Theoretical Analysis of a Two-Stage Active Learning Algorithm}
In this section, we derive necessary and sufficient conditions on the number of \textit{scalar measurements} required for reliable estimation of the unknown graph using our active learning algorithm.  Recall that if we draw $n$ measurements from $p$ vertices, then the number of scalar measurements is $np$.

Section~\ref{subsec:twostage} presents sufficient conditions for a modified version of Algorithm~1 that is designed for graphs in the two-cluster graph family defined in Section~\ref{sec:graphfamily}.  Section~\ref{subsec:prior} presents sufficient conditions when given prior knowledge about the absence of certain edges in the graph.  Section~\ref{subsec:nec} compares the sufficient conditions to necessary conditions required by any passive graphical model selection algorithm.

\subsection{Sufficient Conditions}
\label{subsec:twostage}
Recall that Algorithm~1 uses Algorithm~2 to update the set of active vertices.  Unfortunately, an analysis of Algorithm~2 is not within the scope of this paper and is left for future work.  Instead, we replace Algorithm~2 with another method, specific to the two-cluster decomposition.  The details of the active learning algorithm we analyze is given in Algorithm~4.
\begin{mdframed}
\noindent
\textbf{Algorithm 4.} Two-Stage Active Learning
\setlist[enumerate]
{leftmargin=0.5cm,itemsep=0.05cm,topsep=0.05cm,parsep=0.05cm}
\begin{enumerate}
\item[1)] Draw $n_0 = \eta + c_2 \log p \max\left\{3\rho_0^{-2}, \rho_2^{-2}(\eta+2) \right\} $ measurements, $\Xf_V^{n_0}$, from $V$.
\item[2)] $\widehat{G} \!\gets\! \CIT(\Xf_V^{n_0},\eta,\tau_0)$, where $\tau_0 \!\!= \! \!0.9 \min\{\rho_0,\rho_2\}$. % and $\tau > c_1(\eta+2)\log p/ (n_0-\eta)$.  
\item[3)] Find $\widehat{V}_1$, $\widehat{V}_2$, and $\widehat{T}$ such that $\widehat{T}$ separates $\widehat{V}_1 \backslash \widehat{T}$ and $\widehat{V}_2 \backslash \widehat{T}$ in $\widehat{G}$, $|\widehat{T}| = 1$, and $\widehat{G}[V_2] = G^*[V_2]$.
\item[4)] Let $\widehat{p}_1 = |\widehat{V}_1|$.  Draw $n_1 = \eta + c_2 \rho_1^{-2} (\eta+2) \log \widehat{p}_1 - n_0$ measurements from $\widehat{V}_1$.
\item[5)] $\widehat{G}_1 \gets \CIT(\Xf_{\widehat{V}_1}^{n_0+n_1},\eta,\tau_1)$, where $\tau_1 = 0.9 \rho_1$. % > c_1(\eta+2)\log p_1/ (n_0+n_1-\eta)$.  
\item[6)] Return $\widehat{G} = \widehat{G}_1 \cup \widehat{G}[\widehat{V}_2]$.
\end{enumerate}
\end{mdframed}
Algorithm~4 corresponds to Algorithm~1 with two rounds of measurements ($K = 2$), $A = V$, and $\widehat{E} = \widehat{F} = \emptyset$.  The crux of Algorithm~4 is illustrated in Figure~\ref{fig:firsteg}(c), where we first draw measurements from all the vertices and then focus the next round of measurements over $\widehat{V}_1$.  We \textit{do not} draw measurements over $\widehat{V}_2$ since the edges and the non-edges over $\widehat{V}_2$ are estimated using the first round of measurements.

Before presenting our result regarding Algorithm~4, we state three additional assumptions that we impose on the graphical model.
\begin{enumerate}
\item[(A5)] $\rho_1^{-2} (\eta+2) \log p_1 > \max\{3\rho_0^{-2},\rho_2^{-2} (\eta+2) \} \log p$
\item[(A6)] $0.9 \rho_1 > c_1(\eta+2)\log p_1/ (n_0+n_1-\eta)$
\item[(A7)] $0.9 \min\{\rho_0,\rho_2\} > c_1(\eta+2)\log p/ (n_0-\eta)$
\end{enumerate}
Informally, (A5) ensures that the subgraph over $V_1$ has a more complex structure and requires more measurements to reliably estimate all the edges over $V_1$.  Both (A6) and (A7) ensure that the parameters $\rho_0$, $\rho_1$, and $\rho_2$ are not too small so that the true edges can be distinguished from the non-edges.
\begin{theorem}
\label{thm:firstthmA}
Under Assumptions (A1)-(A7), Algorithm~4 outputs the true graph with probability converging to one as $p \rightarrow \infty$.  Furthermore, for constants $c_1,c_2 > 0$, the number of scalar measurements drawn by Algorithm~4 is equal to

$(p-p_1) c_2 \max\left\{3\rho_0^{-2}, \rho_2^{-2}(\eta+2) \right\} \log p $ 

$\hspace{3.7cm}+ p\eta +  p_1 c_2 \rho_1^{-2} (\eta+2) \log p_1 \,.$
%\begin{align*}
%(p-p_1) c_2 \max\left\{3\rho_0^{-2}, \rho_2^{-2}(\eta+2) \right\} \log p \nonumber  \\ 
%+ p\eta +  p_1 c_2 \rho_1^{-2} (\eta+2) \log p_1 \,.
%\end{align*}
\end{theorem}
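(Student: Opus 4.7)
The plan is to apply Theorem~\ref{thm:firstthm} separately to the two invocations of $\CIT$ in Algorithm~4 and then to tally the scalar measurements. I would begin by showing that, on a high-probability event, Step~2 produces a $\widehat{G}$ such that $\widehat{G}[V_2]=G^*[V_2]$ and $\widehat{G}$ has no edges between $V_1\setminus T$ and $V_2\setminus T$. The recovery of $G^*[V_2]$ uses that non-edges inside $V_2$ admit separators of size at most $\eta$ within $V_2$ (from the definition of ${\cal G}_{p,\eta,d}$) and that true edges inside $V_2$ have conditional correlations bounded below by $\rho_2$ for separators $S\subset V_2$ of size at most $\eta$; this is exactly the setting of Theorem~\ref{thm:firstthm} applied to the $V_2$-subproblem, whose sample-complexity condition $n_0\ge \eta+c_2\rho_2^{-2}(\eta+2)\log p$ is built into the choice of $n_0$. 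For the removal of cross-cluster non-edges, $T$ separates $V_1\setminus T$ from $V_2\setminus T$ with $|T|\le 1$, so each such non-edge has a separator of size at most $1$; pairing this with the $\rho_0$ bound on conditional correlations at $|S|\le 1$, a second application of Theorem~\ref{thm:firstthm} (with $\eta$ replaced by $1$) gives the complementary requirement $n_0\ge \eta+c_2\rho_0^{-2}\cdot 3\log p$. A single threshold $\tau_0=0.9\min\{\rho_0,\rho_2\}$ serves both sub-problems, since it lies below both $0.9\rho_0$ and $0.9\rho_2$, and (A7) supplies the corresponding signal-strength condition. Step~3 therefore identifies $(\widehat V_1,\widehat V_2,\widehat T)=(V_1,V_2,T)$ with probability tending to one.

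Next I would analyze Step~5. Conditioned on the success of Steps~2--3, we have $\widehat V_1=V_1$, $\widehat p_1=p_1$, and the total data available on $V_1$ is $n_0+n_1=\eta+c_2\rho_1^{-2}(\eta+2)\log p_1$ i.i.d.\ samples. Since $G^*[V_1]\in{\cal G}_{p_1,\eta,d}$, invoking Theorem~\ref{thm:firstthm} on the subsystem indexed by $V_1$ with threshold $\tau_1=0.9\rho_1$, complexity parameter $\kappa=\eta$, and conditional-correlation lower bound $\rho_1$ gives $\widehat G_1=G^*[V_1]$ with probability tending to one, where (A6) supplies the prerequisite lower bound on $\rho_1$. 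Because $|T|\le 1$ forces $G^*=G^*[V_1]\cup G^*[V_2]$, a union bound over the two stages yields $\widehat G=\widehat G_1\cup\widehat G[\widehat V_2]=G^*$ with probability tending to one.

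It remains to count scalar measurements. Step~1 draws $n_0$ samples of $X\in\mathbb{R}^p$ and Step~4 draws $n_1$ samples of $X_{\widehat V_1}\in\mathbb{R}^{\widehat p_1}$, for a total (on the success event where $\widehat p_1=p_1$) of $n_0 p+n_1 p_1=n_0(p-p_1)+p_1(n_0+n_1)$. Substituting $n_0=\eta+c_2\log p\max\{3\rho_0^{-2},\rho_2^{-2}(\eta+2)\}$ and $n_0+n_1=\eta+c_2\rho_1^{-2}(\eta+2)\log p_1$ and collecting terms yields
\[
(p-p_1)\,c_2\log p\,\max\!\bigl\{3\rho_0^{-2},\rho_2^{-2}(\eta+2)\bigr\}+p\eta+p_1 c_2\rho_1^{-2}(\eta+2)\log p_1,
\]
which matches the expression in the theorem.

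The hard part will be justifying the first-round recovery, because a single threshold $\tau_0$ must simultaneously control two sub-problems with different conditioning-set sizes ($1$ for cross-cluster separation, $\eta$ for within-$V_2$) and different conditional-correlation floors ($\rho_0$ and $\rho_2$). The $\max$ inside the definition of $n_0$ and Assumption~(A7) are precisely what make both invocations of Theorem~\ref{thm:firstthm} valid with this common threshold; one must also verify that true edges inside $V_1$ are not spuriously deleted in the first round from the perspective of Step~3 (they need not be preserved, only the cross-cluster non-edges must be removed and $G^*[V_2]$ must be recovered). Assumption~(A5) is then what guarantees $n_1>0$ and, more importantly, that the dominant term is concentrated on $V_1$ rather than on $V$, which is where the active scheme will eventually show savings over any passive procedure (as compared in Section~\ref{subsec:nec}).
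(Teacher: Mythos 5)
Your proposal follows essentially the same route as the paper's proof: condition on the event that the first round recovers $G^*[V_2]$ and removes all cross-cluster non-edges, apply Theorem~\ref{thm:firstthm} to each of the two $\CIT$ invocations, and tally $n_0 p + n_1 p_1$. The only (harmless) difference is that you claim Step~3 recovers $(V_1,V_2,T)$ exactly, whereas the paper only establishes $\widehat{V}_1\subseteq V_1$ and $V_2\subseteq\widehat{V}_2$ on that event --- though the paper itself also invokes the assumption $\widehat{V}_1=V_1$ when stating the measurement count.
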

The proof of Theorem~\ref{thm:firstthmA}, outlined in Appendix~\ref{app:c}, first uses Theorem~\ref{thm:firstthm} to show that $n_0$ measurements are sufficient to estimate the two cluster decomposition and the edges over $V_2$, and then again uses   Theorem~\ref{thm:firstthm} to show that $n_0+n_1$ measurements are sufficient to estimate the edges over $V_1$.  Note that Algorithm~4 does not necessarily identify the clusters $V_1$ and $V_2$ in step~3.  However, as shown in the proof of Theorem~\ref{thm:firstthmA}, given $n_0$ measurements, $\widehat{V}_1 \subseteq V_1$ and $V_2 \subseteq \widehat{V}_2$ with high probability.  We now make some additional remarks.

\begin{remark}
We emphasize that Algorithm~4 does \textit{not} assume that $V_1$ and $V_2$ are known.  Instead, Algorithm~4 only assumes that the parameters 
$\rho_0$, $\rho_1$, and $\rho_2$ are known.  Given these parameters, step~3 of Algorithm~4, where we check if $\widehat{G}[V_2] = G^*[V_2]$, can be implemented using the bounds for the $\CIT$ algorithm in Theorem~\ref{thm:firstthm}.  Furthermore, if $G^*$ does not admit a two-cluster decomposition, then $\widehat{V}_1 = V$ and $\widehat{V}_2 = \emptyset$, in which case Algorithm~4 will mirror the passive $\CIT$ algorithm.
\end{remark}

%Although Algorithm~2 is \textit{not practical}, since the choice of $n_0$ and $n_1$ depend on the unknown parameters of the graph, the analysis in Theorem~\ref{thm:firstthmA} allows us to understand the limits of an active learning algorithm.

\begin{remark}
\label{rem:r}
Recall from the definition of $G^*$ in Section~\ref{sec:graphfamily} that we imposed the simplistic assumption that $|T| \le 1$.  At the cost of some additional technicalities, Theorem~\ref{thm:firstthmA} can be extended to the case when $|T| > 1$.  The main change in the analysis will be to consider a slightly larger set $V_1$ to ensure that the edges over $T$ can be accurately estimated.
\end{remark}

%\begin{remark}
%\label{rem:vv1}
%The assumption that $|T| \le 1$ is only placed to simplify the analysis that we present in subsequent sections.  In practice, $|T| \le \eta$, and our analysis can be modified accordingly at the cost of some additional technicalities.
%\end{remark}

%An important step in Algorithm~2 is Step~3 that identifies the remaining edges that need to be estimated.  This step entails ensuring that the edges removed from the graph are necessarily not in $G^*$ and identifying, among the estimated edges, the edges that belong to $G^*$.  We assume $\widehat{V}_1 = V_1$ to highlight the optimal performance of Algorithm~2.  As we shall see later, this step can be implemented in practice using Algorithm~5 in Section~\ref{sec:prac}.

\begin{remark}
The choice of $n_0$ and $n_1$, and the subsequent analysis, is assuming the two cluster decomposition.  In practice, the graph $G^*$ can admit multiple two cluster decompositions.  Subsequently, Algorithm~4 can be tailored for such decompositions.  Thus, we can derive multiple bounds for the scalar measurements required for Algorithm~4 and the optimal one will correspond to the minimum over all two cluster decompositions of the graph $G^*$.
\end{remark}

\begin{remark}
It is easy to see that if (A5) holds, then the difference between the the scalar measurements required for Algorithm~4 and the scalar measurements required for the passive $\CIT$ algorithm is $O((p-p_1) \rho_1^{-2} \log p_1)$.  This suggests that when $p_1 \ll p$, the advantages of using Algorithm~2 may be much more pronounced.  Unfortunately, it is not clear if this analysis is tight since we are comparing the differences between two sufficient conditions.  Regardless, our numerical simulations in Section~6 clearly show the benefits of active learning.
\end{remark}

\subsection{Using Prior Knowledge}
\label{subsec:prior}

In this section, we analyze a variant of Algorithm~4 when given a priori knowledge that there exists no edges between $V_1 \backslash T$ and $V_2 \backslash T$ in $G^*$.  This information could be extracted from prior knowledge about the graphical model of interest.  For example, when studying financial data from companies, there may be prior knowledge available about the sectors of different companies.  When studying gene expression data, there may be prior knowledge available about the different pathways genes belong to.  We show that using such prior knowledge to adapt measurements can lead to significant reductions in the sample complexity of learning the true graph.

In Algorithm~5, we modify Algorithm~4 to take into account the prior knowledge about the graph.

\begin{mdframed}
\textbf{Algorithm 5.}
Given that $T$ separates $V_1 \backslash T$ and $V_2 \backslash T$, implement Algorithm~4 with Steps~1 and 2 replaced by
\begin{itemize}
\item[1)] Draw $n_0$ measurements, $\Xf_V^{n_0}$, from $V$ such that $n_0= \eta + c_2 \rho_2^{-2}  (\eta+2)\log(p_2)$.
\item[2)] $\widehat{G} \gets \CIT(\Xf_V^{n_0},\eta,\tau_0)$, where $\tau_0 = 0.9 \rho_2$. % and $\tau > c_1(\eta+2)\log p_2/ (n_0-\eta)$.  
\end{itemize}
\end{mdframed}
Algorithm~5 simply changes the initial measurements in Algorithm~4 to account for the fact that some non-edges in $G^*$ are already known.  In Algorithm~1, this corresponds to appropriately specifying the set $\widehat{F}$.  Before stating the main result regarding Algorithm~5, which follows easily from Theorem~\ref{thm:firstthmA}, we consider the following assumptions that are analogous to (A5)-(A7).
\begin{enumerate}
\item[(A5$'$)] $\rho_1^{-2} \log p_1 > \rho_2^{-2} \log p$
\item[(A6$'$)] $0.9 \rho_2 > c_1(\eta+2)\log p_2 / (n_0-\eta)$
\item[(A7$'$)] $0.9 \rho_1 > c_1(\eta+2)\log p_1/ (n_0+n_1-\eta)$
\end{enumerate}
Note that $n_0$ and $n_1$ in (A6$'$)-(A7$'$) are defined in Algorithm~5.
\begin{theorem}
\label{thm:firstthmB}
Under Assumptions~(A1)-(A4), (A5$\,'$)-(A7$\,'$) and given that $T$ separates $V_1 \backslash T$ and $V_2 \backslash T$, Algorithm~5 outputs the true graph with probability converging to one as $p \rightarrow \infty$.  Furthermore, for constants $c_1,c_2 > 0$, the number of scalar measurements is equal to

$(p\!-\!p_1) c_2 \rho_2^{-2}(\eta+2) \log p_2 \!+\! p \eta +  p_1 c_2 \rho_1^{-2} (\eta\!+\!2) \log p_1.$
\end{theorem}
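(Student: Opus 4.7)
The plan is to reduce Theorem~\ref{thm:firstthmB} to two applications of Theorem~\ref{thm:firstthm}, exploiting the prior knowledge that $T$ separates $V_1 \setminus T$ from $V_2 \setminus T$ to decouple the recovery of $G^*[V_1]$ and $G^*[V_2]$. Since the two-cluster decomposition is given, step~3 of the underlying Algorithm~4 is trivial: we set $\widehat{V}_1 = V_1$, $\widehat{V}_2 = V_2$, $\widehat{T}=T$ outright, and in particular the difficulty parameter $\rho_0$ never appears.

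First I would show that step~2 of Algorithm~5 correctly recovers $G^*[V_2]$. Because $T$ separates $V_1 \setminus T$ from $V_2 \setminus T$, the global Markov property guarantees that for any pair $i,j \in V_2$ with $(i,j) \notin E(G^*)$, a minimal separator of size at most $\eta$ can be chosen inside $V_2$; moreover, for $(i,j) \in E(G^*[V_2])$, conditioning on extra variables from $V_1 \setminus T$ beyond a set $S \subset V_2$ does not drive $|\rho_{ij|S}|$ to zero, so the relevant minimum conditional correlation is exactly $\rho_2$ as defined in \eqref{eq:rho2}. With $n_0 = \eta + c_2 \rho_2^{-2}(\eta+2)\log p_2$ and threshold $\tau_0 = 0.9\rho_2$, assumption~(A6$'$) matches the hypothesis of Theorem~\ref{thm:firstthm} applied to the subproblem on $V_2$, so $\CIT$ returns $G^*[V_2]$ with probability tending to~$1$.

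Next, steps~4-5 recover $G^*[V_1]$. The choice of $n_1$ yields a total of $n_0+n_1 = \eta + c_2 \rho_1^{-2}(\eta+2)\log p_1$ samples restricted to $V_1$, and by the symmetric Markov argument the minimum conditional correlation over edges of $G^*[V_1]$ relative to separators in $V_1$ is $\rho_1$ from \eqref{eq:rho1}. Assumption~(A7$'$) then verifies the hypothesis of Theorem~\ref{thm:firstthm} for this second application, so $\widehat{G}_1 = G^*[V_1]$ with high probability. Because the prior knowledge guarantees no edges between $V_1 \setminus T$ and $V_2 \setminus T$, the union in step~6 reconstructs $G^*$ exactly; a union bound over the two applications of Theorem~\ref{thm:firstthm} gives $\Pb(\widehat{G}=G^*) \to 1$. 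Assumption~(A5$'$) ensures that $n_1$ is nonnegative so the algorithm is well-defined.

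For the scalar measurement count, step~1 contributes $n_0 p$ and step~4 contributes $n_1 p_1$, for a total
\[
n_0 p + n_1 p_1 = n_0(p-p_1) + (n_0+n_1)p_1 = (p-p_1)\bigl(\eta + c_2 \rho_2^{-2}(\eta+2)\log p_2\bigr) + p_1\bigl(\eta + c_2\rho_1^{-2}(\eta+2)\log p_1\bigr),
\]
which rearranges to the claimed expression. The main obstacle I anticipate is the subtle point in the second paragraph: justifying that $\CIT$ run on the full sample $\Xf_V^{n_0}$ (which searches for separators over all of $V$) still succeeds at threshold $0.9\rho_2$ even though $\rho_2$ only controls the minimum over separators inside $V_2$. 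The cleanest fix is to observe that by the Markov property for graphs induced by $T$, any conditional independence test for a pair in $V_2$ reduces to one using only variables in $V_2$, so the global search cannot produce a spurious deletion, and the analysis of Theorem~\ref{thm:firstthm} applies verbatim to the induced subproblem.
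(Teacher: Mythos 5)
Your proof matches the paper's: the paper gives no standalone argument for this theorem, stating only that it ``follows easily from Theorem~\ref{thm:firstthmA}'', whose proof in Appendix~\ref{app:c} is exactly your two-stage reduction to Theorem~\ref{thm:firstthm} (condition on the first-stage success event over $V_2$, then apply the bound again on $\widehat{V}_1$ with $n_0+n_1$ samples), and your measurement count $n_0 p + n_1 p_1$ rearranges correctly to the stated expression. If anything you are more careful than the paper, which never addresses the subtlety you flag about $\rho_2$ being a minimum only over separators $S \subset V_2$ while $\CIT$ searches over all of $V$.
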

The only difference between Theorem~\ref{thm:firstthmA} and Theorem~\ref{thm:firstthmB} is that the scalar measurements in the later theorem no longer depends on $\rho_0$, the parameter that quantifies the difficulty in learning the two cluster decomposition.
An alternative active learning method is to draw measurements from $V_1$ and $V_2$ separately.  As long as $|T|$ is small, this strategy will roughly need the same number of scalar measurements as Algorithm~3.  However, if there are constraints on the number of joint measurements a system can make, then this later strategy could be more useful.  For example, if the measurements are acquired from a sensor network, then there may be limits on the number of joint measurements sensors can transmit so as to conserve the battery life of sensors.

\subsection{Comparison to Necessary Conditions}
\label{subsec:nec}

We now compare the sufficient conditions in Theorem~\ref{thm:firstthmB} to the necessary conditions for \textit{any} passive algorithm.  Let $\Xf_{V}^n$ be $n$ i.i.d.\ samples drawn from ${\cal N}(0,\Theta^{-1}(G^*))$, where $\Theta(G^*) \in {\cal G}_{p,p_1,p_2,\eta,d}(\theta_1,\theta_2)$.  Let $\psi$ be a graph decoder that takes as input $\Xf_V^n$ and outputs a graph in ${\cal G}_{p,p_1,p_2,\eta,d}(\theta_1,\theta_2)$.  For any decoder $\psi$, define the maximal probability of error as
\[
p_e(\psi) = \max_{\Theta(G) \in {\cal G}_{p_1,p_2,\eta,d}(\theta_1,\theta_2)} \Pb (\psi(\Xf_V^n) \ne G) \,,
\]
where the probability is with respect to the product distribution of $({\cal N}(0,\Theta^{-1}(G)))^n$ over $n$ i.i.d. observations.  We say a graph decoder is high-dimensional consistent if $p_e(\psi) \rightarrow 0$ as $p \rightarrow \infty$.
\begin{theorem}
\label{thm:lb}
Suppose $\theta_1,\theta_2 \in [0,0.5]$ and the decoder $\psi$ is given prior knowledge that there are no edges between $V_1 \backslash T$ and $V_2 \backslash T$.  A necessary condition for high-dimensional consistent graphical model selection over a Gaussian graphical model with the inverse covariance matrix in the set ${\cal G}_{p,p_1,p_2,\eta,d}(\theta_1,\theta_2)$ is 
\begin{align*}
n > \frac{1}{2}\max \left\{\theta_1^{-2}  \log\frac{p_1-d-1}{2e}, \theta_2^{-2}\log\frac{p_2-d-1}{2e} \right\} \,.
\end{align*}
\end{theorem}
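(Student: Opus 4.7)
The plan is to prove this lower bound by a standard Fano-type minimax argument applied separately to each cluster, then combining via the maximum.

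First, because the decoder is informed a priori that no edges cross between $V_1 \setminus T$ and $V_2 \setminus T$, any consistent estimator of $G^*$ trivially yields consistent estimators of $G^*[V_1]$ and $G^*[V_2]$. So it suffices to show, for each $k \in \{1,2\}$, that consistency of recovering $G^*[V_k]$ forces $n > \frac{1}{2}\theta_k^{-2}\log\frac{p_k - d - 1}{2e}$; the max over $k$ then gives the theorem.

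For each $k$, I would build a packing $\mathcal{M}_k$ of inverse covariance matrices inside $\mathcal{G}_{p_k,\eta,d}(\theta_1,\theta_2)$ as follows: fix a baseline graph on $V_k$ with maximum degree at most $d-1$ and a designated vertex $v \in V_k$, and for each of $v$'s non-neighbors $u$, form an alternative graph by adding a single edge $(v,u)$ with off-diagonal precision entry of magnitude $\theta_k$ (sign chosen to keep $\Theta \succ 0$ and the normalized off-diagonal exactly $\theta_k$). This yields a packing of cardinality at least $p_k - d - 1$, all inside the restricted family. The hypothesis $\theta_k \in [0,0.5]$ guarantees each $\Theta$ is positive definite and well-conditioned, and a direct computation of the Gaussian KL divergence shows that any two distributions in the packing differ in only $O(1)$ precision entries of magnitude $\theta_k$, so the per-sample KL is at most $c\,\theta_k^2$ for a small constant $c$.

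Applying the generalized Fano inequality to samples $\Xf_V^n$ gives
\begin{equation*}
p_e(\psi) \;\ge\; 1 - \frac{n \cdot \max_{\Theta,\Theta' \in \mathcal{M}_k} D(P_\Theta \Vert P_{\Theta'}) + \log 2}{\log |\mathcal{M}_k|},
\end{equation*}
so high-dimensional consistency ($p_e(\psi) \to 0$) forces $n \gtrsim \theta_k^{-2}\log |\mathcal{M}_k|$. Plugging in $|\mathcal{M}_k| \ge p_k - d - 1$, absorbing the $\log 2$ Fano correction into the $2e$ in the denominator, and tightening the constant from the per-sample KL bound yields the claimed $\frac{1}{2}\theta_k^{-2}\log\frac{p_k-d-1}{2e}$. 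Taking the maximum over $k$ finishes the proof.

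The main obstacle is ensuring the packing respects every structural constraint defining $\mathcal{G}_{p_k,\eta,d}(\theta_1,\theta_2)$: the degree bound $d$, the minimum-separator bound $\eta$, and, crucially, the minimum normalized conditional covariance $\theta_k$. This last condition depends on the \emph{full} precision matrix, not just on a single off-diagonal entry, so one must verify that adding a single weight-$\theta_k$ edge to the baseline does not shrink any other conditional covariance below $\theta_k$. The cleanest way around this is to use a very sparse baseline (e.g.\ an empty graph or a simple matching), for which the normalization factor $\sqrt{\Theta_{ii}\Theta_{jj}}$ is essentially $1$ and the single added edge contributes exactly $\theta_k$ to the normalized off-diagonal. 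Extracting the precise constants $\frac{1}{2}$ and $2e$ is then routine bookkeeping in Fano.
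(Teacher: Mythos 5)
Your proposal follows essentially the same route as the paper: both use the known cluster decomposition to reduce the problem to two independent per-cluster recovery problems (the paper via the global Markov property) and then apply an information-theoretic lower bound for degree-bounded Gaussian graphical model selection on each cluster, taking the maximum. The only difference is that the paper simply cites the existing single-edge-ensemble bound of Wang--Wainwright for degree-bounded graphs rather than re-deriving it, so your explicit Fano packing is an unpacking of that citation rather than a genuinely different argument.
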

The proof of Theorem~\ref{thm:lb}, given in the supplement, uses information-theoretic methods from \cite{WangWainwright2010}.   

We now compare the passive and active algorithms.  Suppose $\theta_1$ is small enough so that 
$\theta_1^{-2} \log(p_1-d-1) > \theta_2^{-2} \log(p_2-d-1)$, and $\theta_1^{-2} \log(p_1) > \theta_2^{-2} \log(p_2)$.  Then, the necessary conditions on the number of scalar measurements for consistent selection by \textit{any} passive algorithm scales as
\begin{align}
q_{passive} = \Omega( p \theta_1^{-2} \log (p_1-d-1)).
\end{align}
On the other hand, using Theorem~\ref{thm:rhotheta} in Theorem~\ref{thm:firstthmB}, and assuming that $\eta$ is a constant, the sufficient conditions for the active method in Algorithm~5 scales as
\begin{align}
q_{active} = \Omega( (p-p_1) \theta_2^{-2} \log p_2 + p_1 \theta_1^{-2} \log p_1) \,.
\end{align}
Now, consider the condition
\begin{align}
%&p_1 \theta_1^{-2} \log p_1 > (p-p_1) \theta_2^{-1} \log p_2 \\
\theta_1^2 < \theta_2^{2} \frac{p_1 \log p_1}{(p-p_1) \log p_2} \,.
\label{eq:13}
\end{align}
A simple calculation shows that if (\ref{eq:13}) holds, then $q_{active} = \Omega(p_1 \theta_1^{-2} \log p_1)$.  Thus, if $d \ll p_1 \ll p$, and (\ref{eq:13}) holds, then Algorithm~5 requires far fewer number of scalar measurements than any other passive algorithm.  To get an understanding of the condition in (\ref{eq:13}), suppose $p_1 = \sqrt{p}$ and $|T| = 0$.  Then, $\theta_1^2 = O( \theta_2^2/\sqrt{p})$.  In other words, the advantages of active learning are substantial when $\theta_1$ is much smaller than $\theta_2$.

Finally, we note that our analysis is only for Algorithm~5, where information about the graph decomposition is known to the algorithm.  An open problem is to study how the performance of Algorithm~4 compares to the necessary conditions when no information about the graph decomposition is given.

%\section{From Theory to Practice}
%\label{sec:prac}
%
%So far, we have analyzed the theoretical performance of the active learning methods in Algorithms~2 and 3.  However, as mentioned in Remark~\ref{rem:np}, our algorithms are not practical since the steps depend on unknown parameters of the graphical model.  In this Section, we illustrate how the active learning steps can be implemented in practice.  Our main strategy for active learning, which is illustrated in Figure~\ref{fig:exampleactiveugms}, is as follows:
%\begin{quote}
%\emph{Draw measurements from a set of active vertices $A$ such that all edges and non-edges over $A^c$ and those connecting $A^c$ to $A$ have been estimated correctly using either prior measurements or prior knowledge.}
%\end{quote}
%Informally, the \textit{active vertices} are used to represent the set of edges that have yet to be estimated in the graph.  For example, in Algorithms~2 and 3, the initial active vertices are all the $p$ vertices, but once all edges over $V_2$ have been estimated, the active vertices become $V_1$.

\section{Numerical Results}

In this section, we present numerical results that highlight the advantages of our active learning algorithm.  For all synthetic results, we assume that $P_X$ is multivariate Gaussian with mean zero and covariance $\Sigma$.   Define the inverse covariance matrix by $\Theta = \Sigma^{-1}$.  If $P_X$ is Markov on $G^*$, it is well known that $(i,j) \notin G^*$ implies that $\Theta_{ij} = 0$.

We consider three different kinds of synthetic graphical models and assume that $\Theta_{ii} = 1$ for $i = 1,2,\ldots,p$.  For all graphs considered below, we assume that the first $p_1$ vertices are \textit{weak edges} so that the absolute value of the non-zero entries over these vertices is smaller than the other non-zero entries.  We refer to Appendix~\ref{app:e} for results on scale-free graphs.

\noindent
\textbf{Chain Graph:} $\Theta_{i,i+1} = \rho_1$ for $i = 1,\ldots,p_1$ (weak edges) and $\Theta_{i,i+1} = \rho_2$ for $i = p_1+1,\ldots,p$ (strong edges).  Let $\rho_1 = 0.1$, $\rho_2 = 0.3$, and $\Theta_{ij} = \Theta_{ji}$.

\noindent
\textbf{Hub Graph:}  The first $p_1$ vertices are partitioned into vertices of size $10$ and the remaining vertices are partitioned into vertices of size $5$.  For each partition, all vertices are connected to one vertex.  $\Theta_{ij}$ is constructed so that $\Theta_{ij} = 1/d_{ij} - \epsilon$, where $d_{ij}$ is either the degree of vertex $i$ or the degree of vertex $j$, depending on which one is larger.  The scalar $\epsilon = 10^{-4}$.  The above construction ensures that the matrix $\Theta$ is positive and symmetric.

\noindent
\textbf{Cluster Graph:}  All vertices are partitioned into clusters of size $20$.  For the first $p_1/20$ clusters, the edges over each cluster are generated using an Erdos-Renyi (ER) random graph model so that the probability that each edge appears in a cluster is $0.2$.  For the remaining clusters, the graph over each cluster is an ER graph with edges appearing with probability $0.1$.  The inverse covariance matrix is constructed as in the Hub graph case.  This construction ensures that the edges corresponding to the first $p_1 = 0.2p$ vertices have lower partial correlation values than all other edges.

%\smallskip

%\noindent
%\textbf{Chain+Edges Graph:}  We start with a chain graph and then add additional edges using the ER model.  We add edges to the first $p_1$ vertices using an ER model with probability $\log(p_1)/p_1$.  We add edges to the remaining $p_1+1,\ldots,p$ vertices with probability $\log(p-p_1)/(10(p-p_1))$.  The inverse covariance is constructed as in the Hub graph case.
%\smallskip

\begin{table*}
\centering
\caption{Cluster graph with $p = 400$ vertices}
\label{tab:clusterp400}
{\small{
\begin{tabular}{lllll;{1pt/2pt}llll}
\hline
 %&&&Oracle Results& && Model Selection (using EBIC) Results & \\
 &&  \multicolumn{3}{c}{Oracle Results } &\multicolumn{3}{c}{Model Selection Results }\\
 $n$ & Alg & TPR & FDR & ED & TPR & FDR & ED \\
\hline
200 &Nonactive &$0.409$ ($0.003$) &$0.149$ ($0.003$) &$283$ ($0.969$) &$0.29$ ($0.000$) &$0.029$ ($0.000$) &$307$ ($0.145$)\\
  &Active &$0.405$ ($0.002$) &$0.120$ ($0.003$) &$278$ ($0.697$) &$0.296$ ($0.000$) &$0.022$ ($0.000$) &$303$ ($0.157$)\\
\hline
400 &Nonactive &$0.675$ ($0.002$) &$0.0726$ ($0.003$) &$162$ ($0.838$)&$0.568$ ($0.001$) &$0.0148$ ($0.000$) &$188$ ($0.199$)\\
  &Active &$0.695$ ($0.003$) &$0.0666$ ($0.002$) &$152$ ($0.765$) &$0.575$ ($0.001$) &$0.0111$ ($0.000$) &$184$ ($0.202$)\\
\hline
600 &Nonactive &$0.787$ ($0.002$) &$0.0381$ ($0.001$) &$104$ ($0.689$)& $0.739$ ($0.000$) &$0.015$ ($0.000$) &$116$ ($0.144$)\\
  &Active &$0.819$ ($0.002$) &$0.0488$ ($0.001$) &$95.5$ ($0.702$)&$0.747$ ($0.000$) &$0.001$ ($0.000$) &$111$ ($0.161$)\\
\hline
\end{tabular}

}}
\end{table*}

\subsection{Methodology}
We use the $\CIT$ with $\kappa = 1$ to perform the active learning component (computing $H^+$ and $H^-$) of our algorithm and $\CIT$ with $\kappa = 2$ to estimate the final graph.  In all experiments, $K = 5$ and $\delta = 0.5$.

%Motivated from the theoretical results in Section~2, we use a fixed threshold of $\tau = \sqrt{2\log(p)/(n-1)}$ to estimate the graphs $\widehat{G}^{\ell}$ in Step~3 of Algorithm~\ref{alg:huhl}.  In practice, when dealing with non-Gaussian data\footnote{In this case, a different conditional independence test needs to be used.}, $\tau$ can be chosen using methods in \cite{meinshausen2010stability}.  For the synthetic data, we use $K = 5$ (number rounds of measurements).  For real data, we use $K = 10$.

We specify $q = n \times p$, the desired number of scalar measurements, to our algorithm and obtain a matrix $\bar{X}$ of size $\bar{n} \times p$, where $\bar{n} \ge n$.  Each column in this matrix corresponds to the samples obtained from a random variable.  Since we perform active learning, some entries in $\bar{X}$ will be missing.  Moreover, in general, we may not be able to get exactly $q$ scalar measurements, so we stop obtaining measurements until the maximum possible number of measurements have been made.  Note that, when learning the final graph, we only need to consider the random variables over which we have $\bar{n}$ observations.  This is because the edges, and the non-edges, for all other random variables are estimated in the active learning component.  

We compare the active learning graphs to two other estimated graphs.  The first is the graph estimated using $n \times p$ \textit{nonactive} or \textit{passive} measurements.  The second is the graph estimated using an $\bar{n} \times p$ measurement matrix $\widetilde{X}$ that contains randomly chosen missing entries that sum to the number of missing entries in $\bar{X}$.  $\CIT$ can be easily applied to $\widetilde{X}$.  We emphasize that all three graphs, active, nonactive, and random, are estimated using roughly \textit{the same number of scalar measurements}.

We use the extended Bayesian information criterion (EBIC) for model selection \cite{foygel2010extended}.  EBIC requires an appropriate input parameter $\gamma$ that controls the sparsity of the graph.  We use $\gamma = 0.5$ as suggested by the authors in \cite{foygel2010extended}.  When using $\bar{X}$ and $\widetilde{X}$, we make appropriate modifications to compute the likelihood function, see \cite{stadler2012missing} for more details.

We use three measures to compare an estimated graph to the true graph.  The first is the true positive rate (TPR), which is the number of true edges estimated divided by the total number of true edges.  The second is the false discovery rate (FDR), which is the number of falsely detected edges divided by the number of edges estimated.  The third is the edit distance (ED), which is the number of true edges missed plus the number of falsely detected edges.

\subsection{Results}

Figure~\ref{fig:hubg}(a)-(b) show results for the chain and hub graphs with $p = 100$ and $p = 200$ vertices.  The graphs estimated in each case are oracle estimates, i.e., the true graph was used to select an optimal threshold for $\CIT$.  This allows us to quantify the  benefits of active learning and also validate our theoretical results.  The plots show the variation of the edit distance for active, nonactive, and random graph estimates as the number of scalar measurements increase.  For small $q$, no active learning is done since there are not enough measurements to separate the weak parts of the graph from the strong parts.  As $q$ increases, we clearly see the benefits of using active learning.

\begin{figure}[h]
\centering
\subfigure[Chain Graph]{
\includegraphics[scale=0.271]{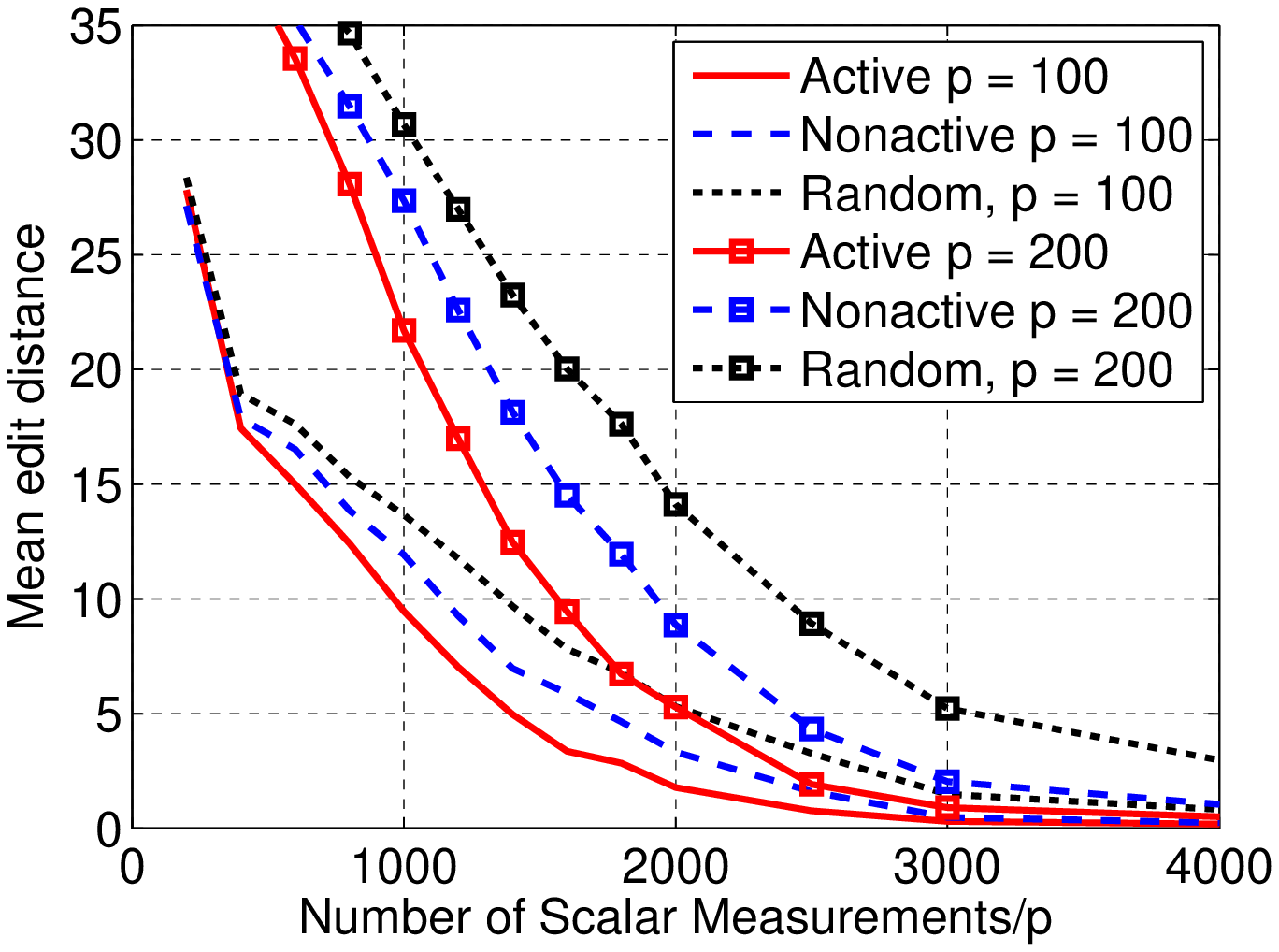}
}
\subfigure[Hub Graph]{
\includegraphics[scale=0.271]{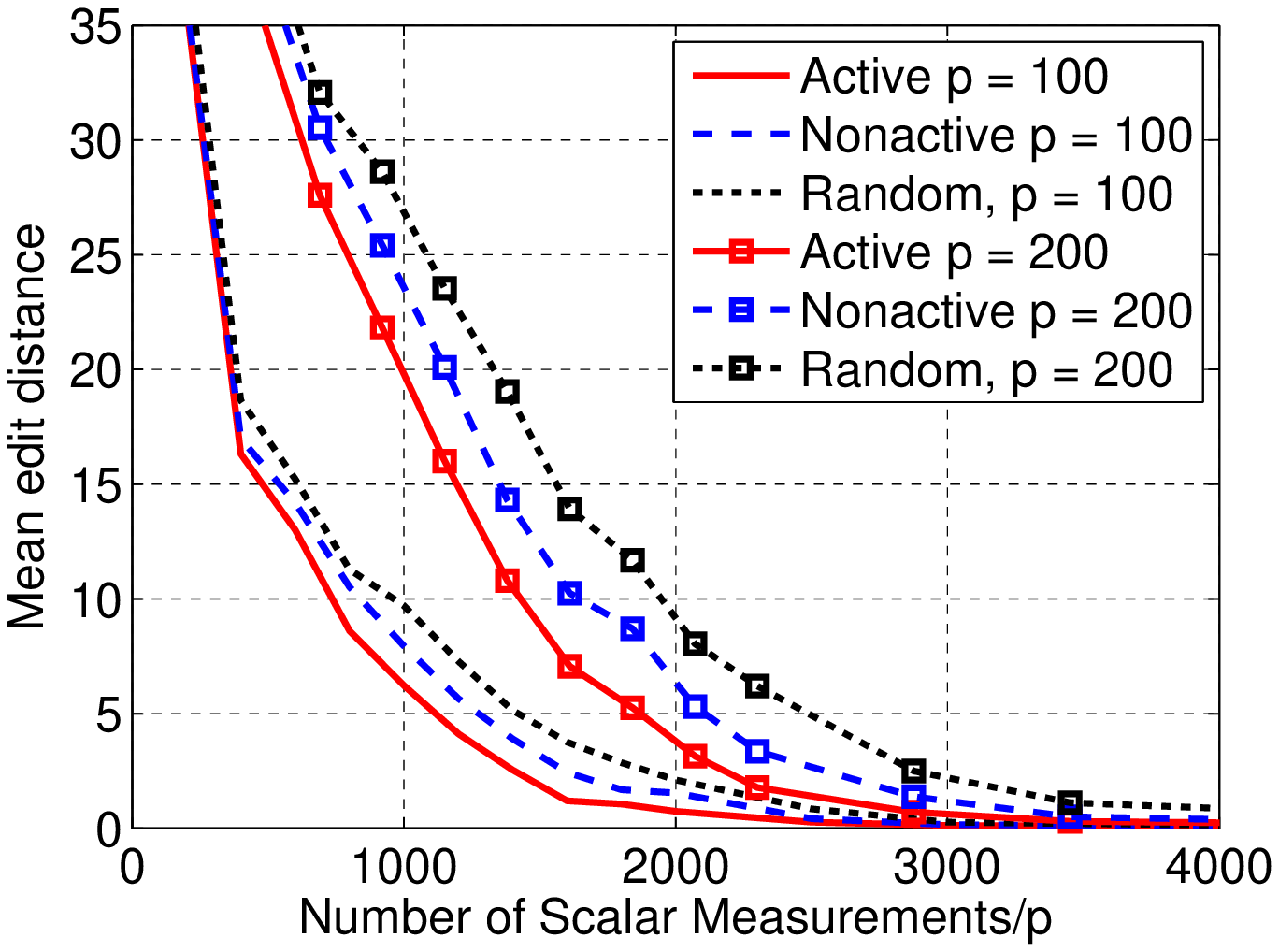}
}
\caption{Mean edit distance vs. number of scalar measurements over $50$ trials for chain and hub graphs.}
\label{fig:hubg}
\end{figure}

%\begin{table}
%\centering
%\caption{Oracle results for Scale-free graph with $p = 100$ vertices}
%\label{tab:scalefreep100}
%\input{tabscalefreep100}
%\end{table}

%
%\begin{table}
%\centering
%\caption{Oracle results for Scale-free graph with $p = 200$ vertices}
%\label{tab:scalefreep200}
%\input{tabscalefreep200}
%\end{table}

Table~\ref{tab:clusterp400} shows results for the cluster graph when $p = 400$ and $n = 200,400,600$.  Each entry in the table is the mean value of the metric over $50$ trials with the standard error given in brackets.  We present both oracle and model selection results.  In both cases, the benefits of active learning is clear.  

%Fig.~\ref{fig:clus} shows a graph of the mean number of measurements obtained from each random variable for the cluster graph with $p = 400$ and $n = 600$.  
%\begin{wrapfigure}{r}{0.25\textwidth}
%\centering
%\includegraphics[scale=0.3]{MeasCluster.eps}
%\caption{Mean number of measurements from each vertex for the cluster graph when $p = 400$ and $n = 600$.}
%\label{fig:clus}
%\end{wrapfigure}
%The active learning algorithm clearly draws more measurements from the first $80$ vertices.  This empirical observation is consistent with our theory since the first $80$ vertices are associated with weak edges that are difficult to learn.  We refer to the supplementary material for results on two real world datasets.

\section{Conclusions}

We have proposed an active learning algorithm for graphical model selection by adapting measurements drawn from a graphical model to certain subsets of vertices in a graph.  We have identified a broad class of graphical models for which active learning can lead to significant savings in the total number of measurements needed for consistent graph recovery.

We highlight two interesting directions of future research.  First, our algorithm depends on successfully selecting a superset of the true graph.  Although we used a heuristic in our implementation, it will be extremely useful to have a consistent estimator that can reliably prune out several edges.  Second, it will be interesting to study active learning algorithms for parameter estimation in graphical models.

%\newpage
\section*{Acknowledgment}
Thanks to Vincent Tan and Gautam Dasarathy for valuable feedback and discussions. The work of D. Vats was supported in part by an Institute for Mathematics and Applications Postdoctoral Fellowship.

\onecolumn

\newpage

\appendix

\section{Selecting the Graphs $H^+$ and $H^-$ in Algorithm~5}
\label{app:a}

\begin{mdframed}
\textbf{Algorithm 6:} Find graphs $H^+$ and $H^-$
\begin{itemize}
\item  \textit{Inputs:} $\Xf$, $\widehat{E}$ and $\widehat{F}$
\item \textbf{for} $\ell = 1,\ldots,L$
\begin{itemize}
\item Estimate a graph $\widehat{G}^{(\ell)}$ using random subsamples such that
\begin{itemize}
\item $(i,j) \in E(\widehat{G}^{(\ell)})$ $\forall$ $(i,j) \in \widehat{E}$
\item $(i,j) \notin E(\widehat{G}^{(\ell)})$ $\forall$ $(i,j) \in \widehat{F}$
\end{itemize}
\end{itemize} 
\item $Q_{ij} \gets$ Fraction of times the edge $(i,j)$ appears in the graphs $\widehat{G}^{(1)},\ldots,\widehat{G}^{(L)}$ \\
\item Find $H^+$ s.t. $(i,j)\! \in\! E(H^+)$ for $Q_{ij} \!\ge \!\alpha^+$ \\
\item Find $H^-$ s.t. $(i,j)\! \in\! E(H^-)$ for $Q_{ij} \!\ge\! \alpha^-$ \\
\item \textbf{return} $H^+$ and $H^-$
\end{itemize}
\end{mdframed}

In this section, we discuss the algorithm to estimate the graphs $H^+$ and $H^-$ in Algorithm~2.  The main idea, outlined in Algorithm~6 above, is to use stability selection \cite{meinshausen2010stability} to estimate edges in the unknown graph $G^*$.  We first estimate multiple different graphs using $L$ ($30$ in our simulations) randomly subsampled measurements (Line~1).  The graphs are estimated in such a way that all edges in $\widehat{E}$ are in the estimated graph and all edges in $\widehat{F}$ are not in the estimated graph.  This is done so that the graphs estimated are consistent with prior estimates of $G^*$.  Next, for each edge $(i,j) \in V \times V$, we compute the fraction of times it appears in one of the estimated graphs.  We store all these values in the matrix $Q_{ij}$.  We choose $H^-$ so that it contains all edges for which $Q_{ij} \ge \alpha^{-}$.  We choose $H^+$ so that it contains all edges for which $Q_{ij} \ge \alpha^{+}$.  Both $\alpha^-$ and $\alpha^+$ influence the performance of the active learning algorithm.  We conservatively choose them so that $\alpha^{-} = 1.0$ and $\alpha^+ = 0.1$. 

\section{Proof of Theorem~\ref{thm:firstthm}}
\label{app:b}

In this section, we analyze Algorithm~3.  The proof methodology is motivated from \cite{KalischBuhlmann2007}.  Throughout this section, we assume that $\widehat{G} = \CIT(\Xf^n,\eta,\tau_n)$, where $\CIT$ is outlined in Algorithm~1.
We are interested in finding conditions under which $\widehat{G} = G^*$ with high probability.  To this end, define the set $B_{\eta}$ as follows
\begin{equation}
B_{\eta} = \{(i,j,S): i, j \in V, i \ne j, S \subseteq V \backslash \{i,j\}, |S| \le \eta \} \,.
\end{equation}
The following concentration inequality follows from \cite{KalischBuhlmann2007} and \cite{AnimaTanWillsky2011b}
\begin{lemma}
Under Assumptions~(A1) and (A3), there exists constants $c_1$ and $c_2$ such that for $0 < \epsilon < 1$,
\begin{equation}
\sup_{(i,j,S) \in B_{\eta}}\Pb\left(||{\rho}_{ij|S}| - |\widehat{\rho}_{ij|S}|| > \epsilon \right) \le c_1 \exp \left( - c_2 (n - \eta) \epsilon^2 \right) \,,
\label{eq:conceq}
\end{equation}
where $C_1$ is a constant, and $n$ is the number of vector valued measurements made of $X_i, X_j$, and $X_S$.
\end{lemma}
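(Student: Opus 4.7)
The plan is to reduce the sup-over-triples bound to a pointwise concentration inequality for a single sample partial correlation coefficient, and then observe that the resulting bound depends on $(i,j,S)$ only through $|S| \le \eta$. Fix any $(i,j,S) \in B_\eta$. By the reverse triangle inequality, $\bigl||\rho_{ij|S}| - |\widehat{\rho}_{ij|S}|\bigr| \le |\rho_{ij|S} - \widehat{\rho}_{ij|S}|$, so it suffices to control $\Pb(|\rho_{ij|S} - \widehat{\rho}_{ij|S}| > \epsilon)$ uniformly.

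First I would invoke Fisher's $z$-transform $Z(r) = \tfrac{1}{2}\log\tfrac{1+r}{1-r}$, which maps $(-1,1)$ diffeomorphically onto $\mathbb{R}$. Under (A1), the conditional law of $(X_i, X_j)$ given $X_S$ is bivariate Gaussian with correlation $\rho_{ij|S}$, and classical results on the sample correlation of Gaussian vectors (as used in Kalisch--B\"uhlmann) give a sub-Gaussian-type tail for $Z(\widehat{\rho}_{ij|S}) - Z(\rho_{ij|S})$ with scale $1/(n-|S|-3)$. Next, using the Lipschitz constant of $\tanh = Z^{-1}$ on a compact interval of $(-1,1)$---which is available thanks to (A3) bounding $|\rho_{ij|S}|$ strictly below $1$---I would transfer the tail bound back to $\widehat{\rho}_{ij|S}$, obtaining $\Pb(|\widehat{\rho}_{ij|S} - \rho_{ij|S}| > \epsilon) \le c_1' \exp\bigl(-c_2'(n - |S| - 3)\epsilon^2\bigr)$ for $0 < \epsilon < 1$. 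Since $|S| \le \eta$ and the right-hand side is monotone in $|S|$, replacing $|S|$ by $\eta$ and absorbing the additive constant into $c_2$ gives the claimed $(n-\eta)$ scaling in the exponent, uniformly over $B_\eta$.

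The main technical obstacle is the Lipschitz-transfer step: naively, the Lipschitz constant of $\tanh$ blows up as its argument approaches $\pm\infty$, i.e., as the partial correlations approach $\pm 1$. Assumption (A3) keeps the \emph{true} partial correlations bounded away from $\pm 1$, but one also needs the \emph{empirical} quantity $\widehat{\rho}_{ij|S}$ to lie in a compact sub-interval of $(-1,1)$ with high probability. The standard fix, which I would follow, is a two-stage argument: first a coarse large-deviation bound (directly on the entries of the empirical conditional covariance matrix, which are quadratic forms in Gaussians and admit Hanson--Wright-type tails) shows that $\widehat{\rho}_{ij|S}$ stays in a fixed compact interval inside $(-1,1)$ except on an event of probability at most $c_1 e^{-c_2(n-\eta)}$; then the Lipschitz bound on $Z^{-1}$ can be applied on the complementary event.

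Finally, since the derived bound is independent of the particular triple $(i,j,S) \in B_\eta$ beyond the constraint $|S| \le \eta$, taking the supremum over $B_\eta$ preserves the inequality, yielding the statement of the lemma with appropriately chosen constants $c_1, c_2 > 0$.
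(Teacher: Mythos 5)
Your proposal is essentially sound, and it fleshes out an argument that the paper itself handles by pure citation: the paper's entire proof of this lemma is the single sentence ``Applies Lemma~2 from \cite{KalischBuhlmann2007} to the result in Lemma~18 in \cite{AnimaTanWillsky2011b}.'' What you reconstruct --- reduce to $|\rho_{ij|S}-\widehat{\rho}_{ij|S}|$ by the reverse triangle inequality, use the fact that conditionally on $X_S$ the pair $(X_i,X_j)$ is bivariate Gaussian so the sample partial correlation behaves like an ordinary sample correlation on $n-|S|$ effective observations, obtain an exponential tail with rate $(n-|S|-O(1))\epsilon^2$, and observe that the bound depends on $(i,j,S)$ only through $|S|\le\eta$ so the supremum over $B_\eta$ is free --- is precisely the content of that citation, so the routes coincide in substance. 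One correction: your ``main technical obstacle'' is not an obstacle, and the paragraph devoted to it rests on a mix-up between $Z$ and $Z^{-1}$. The function $\tanh=Z^{-1}$ is \emph{globally} $1$-Lipschitz (its derivative $\mathrm{sech}^2$ is bounded by $1$ everywhere and decays at $\pm\infty$); it is $Z=\mathrm{artanh}$ whose derivative blows up near $\pm 1$. Since you are transferring a tail bound from the $z$-scale \emph{down} to the $\rho$-scale, the contraction property of $\tanh$ gives $|\widehat{\rho}_{ij|S}-\rho_{ij|S}|\le |Z(\widehat{\rho}_{ij|S})-Z(\rho_{ij|S})|$ unconditionally, so no high-probability localization of $\widehat{\rho}_{ij|S}$ and no two-stage Hanson--Wright argument is needed. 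Where (A3) genuinely enters is one step earlier: the finite-sample sub-Gaussian tail for $Z(\widehat{\rho}_{ij|S})-Z(\rho_{ij|S})$ (equivalently, the large-deviation bound extracted from Hotelling's exact density of the sample correlation in Lemma~2 of \cite{KalischBuhlmann2007}) has constants that are uniform only because the true partial correlations are bounded away from $\pm 1$. That tail bound is the one substantive ingredient, and you assert it by citation exactly as the paper does, so there is no gap relative to the paper's own standard of rigor.
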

\begin{proof}
Applies Lemma~2 from \cite{KalischBuhlmann2007} to the result in Lemma~18 in \cite{AnimaTanWillsky2011b}.
\end{proof}

Let $p_e = \Pb( \widehat{G} \ne G)$, where the probability measure $\Pb$ is with respect to $P_X$.  Recall that we threshold the empirical conditional partial correlation $\widehat{\rho}_{ij|S}$ to test for conditional independence, i.e., $\widehat{\rho}_{ij|S} \le \tau_n \Longrightarrow X_i \ind X_j | X_S$.  An error may occur if there exists two distinct vertices $i$ and $j$ such that either $\rho_{ij|S} = 0$ and $|\widehat{\rho}_{ij|S}| > \lambda_n$ or $|\rho_{ij|S}| > 0$ and $|\widehat{\rho}_{ij|S}| \le \tau_n$.  Thus, we have
\begin{align}
p_e &\le \Pb({\cal E}_1) + \Pb({\cal E}_2) \,, \\
\Pb({\cal E}_1) &= \Pb\left( \bigcup_{(i,j) \notin G} \{ \text{$\exists$ $S$ s.t. $|\widehat{\rho}_{ij|S}| > \lambda_n$} \} \right) \\
\Pb({\cal E}_2) &= \Pb\left( \bigcup_{(i,j) \in G} \{ \text{$\exists$ $S$ s.t. $|\widehat{\rho}_{ij|S}| \le \lambda_n$} \} \right) \,.
\end{align}
We will find conditions under which $\Pb({\cal E}_1) \rightarrow 0$ and $\Pb({\cal E}_2) \rightarrow 0$ which will imply that $P_e \rightarrow 0$.  The term $\Pb({\cal E}_1)$, probability of including an edge in $\widehat{G}$ that does not belong to the true graph, can be upper bounded as follows:
\begin{align}
\Pb({\cal E}_1) &\le \Pb\left( \bigcup_{(i,j) \notin G} \{ \text{$\exists$ $S$ s.t. $|\widehat{\rho}_{ij|S}| > \tau_n$} \} \right)
\le \Pb\left( \bigcup_{(i,j) \notin G, S \subset V \backslash \{i,j\}} \{ \text{$|\widehat{\rho}_{ij|S}| > \tau_n$} \} \right) \\
&\le p^{\eta+2} \sup_{(i,j,S) \in B_{\eta}} \Pb\left(|\widehat{\rho}_{ij|S}| > \tau_n\right) \\
&\le c_1 p^{\eta+2} \exp\left(-c_2 (n-\eta)\tau_n^2 \right)
= c_1 \exp\left( (\eta+2) \log(p) -c_2 (n-\eta)\tau_n^2 \right)
\end{align}
The terms $p^{\eta+2}$ comes from the fact that there are at most $p^2$ number of edges and the algorithm searches over at most $p^{\eta}$ number of separators for each edge.  Choosing $\tau_n$ such that $\tau_n > c_1 (\eta+2) \log p / (n-\eta)$ ensures that $\Pb({\cal E}_1) \rightarrow 0$ as $p \rightarrow \infty$.  

Suppose we select $\tau_n < c_3 \rho_{\min}$ for a constant $c_3 < 1$.
The term $\Pb({\cal E}_2)$, probability of not including an edge in $\widehat{G}$ that does belong to the true graph, can be upper bounded as follows:
\begin{align}
\Pb({\cal E}_2) &\le \Pb\left( \bigcup_{(i,j) \in G} \{ \text{$\exists$ $S$ s.t. $|\widehat{\rho}_{ij|S}| \le \tau_n$} \} \right) \\
&\le \Pb\left( \bigcup_{(i,j) \in G, S \subset V \backslash \{i,j\}}  \text{$|\rho_{ij|S}|-|\widehat{\rho}_{ij|S}| > |\rho_{ij|S}| - \tau_n$} \right) \\
&\le p^{\eta+2} \sup_{(i,j,S) \in B_{\eta}} \Pb\left(|\rho_{ij|S}|-|\widehat{\rho}_{ij|S}| > |\rho_{ij|S}| - \tau_n\right) \\
&\le p^{\eta+2} \sup_{(i,j,S) \in B_{\eta}} \Pb\left(||\rho_{ij|S}|-|\widehat{\rho}_{ij|S}|| > \rho_{\min} - \tau_n\right) \\
&\le c_1 p^{\eta+2} \exp\left(-c_2 (n-\eta)(\rho_{\min} - \tau_n)^2 \right)
= c_1 \exp\left( (\eta+2) \log(p) - c_4(n-\eta) \rho_{\min}^2 \right) \label{eq:a5}
\,.
\end{align}
To obtain (\ref{eq:a5}), we use the choice of $\tau_n$ so that $(\rho_{\min}-\tau_n) > (1-c_3) \rho_{\min}$.  For an appropriate constant $c_5 > 0$, choosing
$
n > \eta + c_5 \rho_{\min}^{-2} (\eta+2) \log(p)$ ensures $\Pb({\cal E}_2) \rightarrow 0$ as $n,p \rightarrow \infty$.  We note that the choice of $c_5$ only depends on $M$.  This concludes the proof. $\qed$

\section{Proof of Theorem~\ref{thm:firstthmA}}
\label{app:c}
In this section, we analyze Algorithm~4.  Recall the assumption that there exists sets of vertices $V_1$, $V_2$, and $T$ such that there are no edges between $V_1 \backslash T$ and $V_2 \backslash T$ in $G^*$.  Note that we only assume the \textit{existence} of these clusters and the corresponding graph decomposition.  Now, let $\widehat{G}$ be the graph estimated after Step~2 of Algorithm~4, i.e., after drawing $n_0$ measurements.  Define the event ${\cal D}$ as
\[
{\cal D} = \{ \widehat{G}[V_2] = G^*[V_2]  \text{ and } \forall \; i \in V_1\backslash T \text{ and } \forall \; j \in V_2\backslash T, \; (i,j) \notin \widehat{G} \} \,.
\]
In words, ${\cal D}$ defines the event that after $n_0$ measurements, the $\CIT$ algorithm is able to accurately identify all the edges and the non-edges over $V_2$ and all the non-edges that connect $V_1$ and $V_2$.  Given that ${\cal D}$ is true, it is easy to see that for any two-cluster decomposition of $\widehat{G}$ over clusters $\widehat{V}_1$ and $\widehat{V}_2$ such that $\widehat{G}[\widehat{V}_2] = G^*[\widehat{V}_2]$, we have that $\widehat{V}_1 \subseteq V_1$ and $V_2 \subseteq \widehat{V}_2$.

Let $\widehat{G}_1$ be the graph estimated in Step~5 of Algorithm~4 and let $\widehat{G}_F = \widehat{G}[\widehat{V}_2] \cup \widehat{G}_1$ be the output of Algorithm~4.  Conditioning on ${\cal D}$, and using the assumption that $\widehat{V}_1 = V_1$, we have
\begin{align}
\Pb(\widehat{G}_F \ne G^*) &= \Pb(\widehat{G}_F \ne G^* | {\cal D}) \Pb({\cal D}) + 
\Pb(\widehat{G}_F \ne G^* | {\cal D}^c ) \Pb({\cal D}^c) \\
 &\le  \Pb(\widehat{G}_F \ne G^* | {\cal D}) +  \Pb({\cal D}^c) \,.
\end{align}
We now make use of Theorem~\ref{thm:firstthm}.  Given the scaling of $n_0$, it is clear that $\Pb({\cal D}^c) \rightarrow 0$ as $p \rightarrow \infty$.  Furthermore, given that ${\cal D}$ holds, we only need to estimate the edges over $\widehat{V}_1$ in Step~5 of Algorithm~4.  Since $\widehat{V}_1 \subseteq V_1$ when given ${\cal D}$, it follows that $ \Pb(\widehat{G}_F \ne G^* | {\cal D}) \rightarrow 0$ as $p \rightarrow \infty$ given the scaling of $n_0 + n_1$.  This concludes the proof. $\qed$

\section{Proof of Theorem~\ref{thm:lb}}
\label{app:d}

Once $V_1$ and $V_2$ have been identified, by the global Markov property of graphical models, the graph learning can be decomposed into two independent problems of learning the edges in $G[V_1]$ and learning the edges in $G[V_2]$.  Thus, $p_e(\psi)$ can be lower bounded by
\[
\max_{\Theta(G) } \left\{ \max\left[ \Pb (\psi(\Xf_{V_1}^n) \ne G[V_1]), \Pb(\psi(\Xf_{V_2}^n) \ne G[V_2]) \right] \right\} \,,
\]
where $\Theta(G) \in {\cal G}_{p,p_1,p_2,\eta,d}(\theta_1,\theta_2)$.  By definition, we know that $G[V_1]$ is sampled uniformly from ${\cal G}_{p_1,\eta,d}$.  By identifying that ${\cal G}_{p_1,0,d} \subseteq {\cal G}_{p_1,\eta,d}$, we can now make use of the results in \cite{Wainwright2009} for degree bounded graphs.  In particular, we have from \cite{Wainwright2009} that if
\[
n \le \max\left\{ \frac{\log \binom{p_1-d}{2} -1 }{4 \theta_1^2},
\frac{\log \binom{p_2-d}{2} -1 }{4 \theta_2^2} \right\} \,,
\]
then $p_e(\psi) \rightarrow 1$ as $n \rightarrow \infty$.  This leads to the necessary condition in the statement of the theorem and concludes the proof. $\qed$

\section{Numerical Results on Scale-Free Graphs}
\label{app:e}
Tables~\ref{tab:scalefreep400} shows results for scale-free graphs.  It is typical for scale-free graphs to contain a small number of vertices that act as hubs and are connected to many other vertices in the graph.   The inverse covariance is constructed as in the Hub graph case.  For this graphical model, the weak edges correspond to all edges that connect to vertices with high degree.  We again see that active learning results in superior performance than passive learning.

\begin{table}[h]
\centering
\caption{Scale-free graph with $p = 400$ vertices}
\label{tab:scalefreep400}
{\small{
\begin{tabular}{lllll;{1pt/2pt}llll}
\hline
 %&&&Oracle Results& && Model Selection (using EBIC) Results & \\
 &&  \multicolumn{3}{c}{Oracle Results } &\multicolumn{3}{c}{Model Selection Results }\\
 $n$ & Alg & TPR & FDR & ED & TPR & FDR & ED \\
\hline
200 &Nonactive  &$0.405$ ($0.001$) &$0.059$ ($0.002$) &$247$ ($0.410$)
&$0.382$ ($0.000$) &$0.033$ ($0.000$) &$251$ ($0.121$)\\
  &Active &$0.422$ ($0.001$) &$0.040$ ($0.002$) &$237$ ($0.402$)
 &$0.391$ ($0.000$) &$0.017$ ($0.000$) &$245$ ($0.100$)\\
\hline
400 &Nonactive  &$0.522$ ($0.002$) &$0.043$ ($0.002$) &$200$ ($0.341$)
&$0.500$ ($0.000$) &$0.023$ ($0.000$) &$204$ ($0.107)$ \\
  &Active &$0.545$ ($0.001$) &$0.036$ ($0.002$) &$189$ ($0.340$)
  &$0.509$ ($0.000$) &$0.001$ ($0.000$) &$197$ ($0.121$)\\
\hline
600 &Nonactive  &$0.605$ ($0.001$) &$0.0346$ ($0.001$) &$166$ ($0.361$) 
&$0.582$ ($0.000$) &$0.021$ ($0.000$) &$171$ ($0.123$)\\
  &Active  &$0.634$ ($0.001$) &$0.0321$ ($0.001$) &$154$ ($0.378$) 
  &$0.592$ ($0.000$) &$0.008$ ($0.000$) &$164$ ($0.131$)\\
\hline
\end{tabular}

%
%\begin{tabular}{llllll}
%\hline
% $n$ & Alg  & TP & FDR & ED \\
%\hline
%200 &Nonactive  &$0.382$ ($0.00028$) &$0.0329$ ($0.000334$) &$251$ ($0.121$)\\
%  &Active, $K = 5$  &$0.391$ ($0.000267$) &$0.0172$ ($0.000187$) &$245$ ($0.0999$)\\
%\hline
%400 &Nonactive &$0.5$ ($0.000293$) &$0.0227$ ($0.000225$) &$204$ ($0.107$)\\
%  &Active, $K = 5$ &$0.509$ ($0.000321$) &$0.00983$ ($0.000163$) &$197$ ($0.121$)\\
%\hline
%600 &Nonactive &$0.582$ ($0.00033$) &$0.0206$ ($0.000194$) &$171$ ($0.123$)\\
%  &Active, $K = 5$ &$0.592$ ($0.000343$) &$0.00815$ ($0.000141$) &$164$ ($0.131$)\\
%\hline
%\end{tabular}
}}
\end{table}

\end{document}